\documentclass[letterpaper]{article}
\usepackage{ijcai13}
\usepackage{times}
\usepackage{helvet}
\usepackage{courier}
\usepackage{hyperref}
\usepackage{graphicx}
\usepackage{latexsym}
\usepackage{amsmath,amssymb,amsfonts,amsthm}
\usepackage{graphicx}
\usepackage{hyperref}
\usepackage{multirow}



\newtheorem{theorem}{Theorem}
\newtheorem{lemma}{Lemma}
\newtheorem{proposition}{Proposition}
\newtheorem{example}{Example}
\newtheorem{corollary}{Corollary}

\newcommand{\myOmit}[1]{}

\newcommand{\Alt}{\mbox{\sc AltPolicy}}

\newcommand{\BestPref}{\mbox{\sc BestPref}}
\newcommand{\Random}{\mbox{\sc Random}}

\newcommand{\reals}{\mathbb{R}}

\newcommand{\vect}[1]{\mbox{\boldmath$#1$}}

\DeclareMathOperator{\sw}{sw}
\DeclareMathOperator{\dsw}{dsw}

\newcommand{\prob}[1]{\textbf{P}\left(#1\right)}
\newcommand{\expect}[1]{\textbf{E}\left(#1\right)}

{\makeatletter
 \gdef\xxxmark{%
   \expandafter\ifx\csname @mpargs\endcsname\relax 
     \expandafter\ifx\csname @captype\endcsname\relax 
       \marginpar{xxx}
     \else
       xxx 
     \fi
   \else
     xxx 
   \fi}
 \gdef\xxx{\@ifnextchar[\xxx@lab\xxx@nolab}
 \long\gdef\xxx@lab[#1]#2{{\bf [\xxxmark #2 ---{\sc #1}]}}
 \long\gdef\xxx@nolab#1{{\bf [\xxxmark #1]}}
}

\frenchspacing
\pdfinfo{
/Title (A Social Welfare Optimal Sequential Allocation Procedure)
/Subject (Social choice, fair division)
/Author (Thomas Kalinowski, Nina Narodytska, Toby Walsh)}
 \begin{document}
%
\title{A Social Welfare Optimal Sequential Allocation Procedure}
\author{Thomas Kalinowski \\
Universit\"{a}t Rostock\\
Rostock, Germany\\
thomas.kalinowski@uni-rostock.de
\And 
Nina Narodytska
\and
Toby Walsh \\
NICTA and UNSW \\
Sydney, Australia \\
\{nina.narodytska,toby.walsh\}@nicta.com.au
}
\maketitle

\begin{abstract}
We consider a simple sequential allocation
procedure for sharing indivisible items between
agents in which agents take turns to pick items.
Supposing additive utilities
and independence between the agents, we
show that the expected utility of each agent
is computable in polynomial time. Using
this result, we prove that the expected utilitarian social welfare
is maximized when agents take alternate turns. We also
argue that this mechanism remains optimal when
agents behave strategically.
\end{abstract}

\section{Introduction}

There exist a variety of mechanisms to
share indivisible goods between agents
without side payments 
\cite{bfscw2000,hpscw2002,peftd2003,bkjtp2004,undercut}. 
One of the simplest is simply to let the agents take turns
to pick an item.
This mechanism is parameterized by a policy,
the order in which agents take turns.
In the alternating policy, agents take turns in
a fixed order, whilst in the balanced alternating
policy, the order of agents reverses each
round. Bouveret and Lang (\citeyear{bouveretgeneral})
study a simple model of this mechanism
in which agents have strict preferences
over items, and utilities are additive.
They conjecture that  computing
the expected social welfare for a given policy
is NP-hard supposing all preference orderings are
equally likely.
Based on 
simulation for up to 12 items,
they conjecture
that the alternating policy maximizes the expected utilitarian
social welfare for Borda utilities, and prove it does so
asymptotically. We close
both conjectures. Surprisingly, we prove that the expected
utility of each agent can be computed in polynomial time
for any policy and utility function.
Using this result, we prove that the alternating policy maximizes
the expected utilitarian social welfare
for any number of
items and any
linear utility function including Borda.
Our results provides some
justification for a mechanism in use in school playgrounds
around the world.

\section{Notation}
\label{eq:notation}

We have $p$ items and $n$ agents. Each agents has a total
preference order over the items. A \emph{profile} is
an $n$-tuple of such orders.
Agents share the same utility function.
An item ranked in $k$th position has a utility
$g(k)$. For Borda utilities, $g(k)=p-k+1$.
The utility of a set of items is merely the sum
of their individual utilities.
Preference orders are independent and drawn
uniformly at random from the set of all $p!$ possibilities
(full independence). 
Agents take turns to pick items according to a \emph{policy},
a sequence $\pi=\pi_1\ldots\pi_p\in\{1,2,\ldots,n\}^p$. At the $k$-th step,
agent $\pi_k$ chooses one item from the remaining set.
Without loss of generality, we suppose $\pi_1=1$.
For profile $R$, $u_R(i,\pi)$ denotes the utility
gained by agent $i$ supposing every agent always chooses
the highest ranked item in their ranking
from the available items. We write
$\overline{u_i}(\pi)$ for the expectation of $u_R(i,\pi)$
over all possible profiles.
We take an utilitarian standpoint, measuring social welfare by
the sum of the utilities: 
$\sw_R(\pi)=\sum_{i=1}^nu_R(i,\pi)$. By linearity of expectation,
the expected utilitarian social welfare is $\overline{\sw}(\pi)=\sum_{i=1}^n\overline{u_i}(\pi)$. To help compute the expected utilities,
we need a sequence $\gamma_k$ given by $\gamma_1=\gamma_2=1$,
$\gamma_k=\prod\limits_{j=1}^{\lfloor(k-1)/2\rfloor}\frac{2j+1}{2j}$
for $k\geqslant 3$ and $\overline\gamma_k=\gamma_k/k$ for all $k$.
Asymptotically
$\gamma_k=\sqrt{\frac{2k}{\pi}}+O\left(\frac{1}{\sqrt k}\right).$
To simplify notation,
we suppose empty sums are zero and empty products
are one.
\section{Computing the Expected Social Welfare}\label{sec:expectations}
Bouveret and Lang~\citeyear{bouveretgeneral}
conjectured that it is NP-hard to compute
the expected social welfare of a given policy.
This calculation
takes into account a super-exponential number of
possible profiles. Nevertheless, as we show here,
the expected utility of each agent can be computed in
just $O(np^2)$ time for an arbitrary utility
function, and $O(np)$ time for Borda utilities.
We begin with this last case, and then extend
the results to the general case.

Let $\mathcal P^n_p$ denote the set of all policies
of length $p$ for $n$ agents. For $p\geqslant 2$, we define an operator $\mathcal P^n_p\to\mathcal P^n_{p-1}$ mapping $\pi\mapsto\tilde\pi$, by deleting the the first entry. More precisely,
$\tilde\pi_i=\pi_{i+1}$ for $ i\in \{1,\dots,p-1\}$.
For example, $\pi=1211$ and $\tilde\pi = 211$.
%
\begin{lemma}\label{lem:recursion}
For Borda scoring, $n\geqslant 2$ agents, $p\geqslant 2$ items and $\pi\in\mathcal P^n_p$ with $\pi_1=1$,
we have:
\begin{align*}
  \overline u_{1}(\pi)&=p+\overline u_{1}(\tilde\pi), & \overline u_i(\pi)&=\frac{p+1}{p}\overline u_i(\tilde\pi), i\neq 1
\end{align*}
and these values can be computed in $O(np)$ time.
\end{lemma}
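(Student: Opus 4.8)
The plan is to condition on the first move and reduce to the $(p-1)$-item instance described by $\tilde\pi$. Since $\pi_1=1$, agent $1$ takes its top-ranked item at step $1$; by full independence and uniformity this item is uniformly distributed among the $p$ items and independent of every other agent's order. After its removal, each agent's order restricted to the remaining $p-1$ items is again uniform (for agent $1$, condition on which item is its top; for the others, a uniform order restricted to $p-1$ of $p$ items is uniform), and the orders stay mutually independent. So, conditioned on which item is removed, the continuation is a fresh instance of the model on $p-1$ items with policy $\tilde\pi$, and $u_R(i,\pi)$ is the Borda value — in the $p$-item scale — of the bundle agent $i$ ends up with.

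For agent $1$: the removed item is agent $1$'s top item, worth $p$. Every item agent $1$ can still obtain had old rank $\ge 2$, so its old rank $j$ becomes new rank $j-1$ and its old value $p-j+1$ equals its fresh $(p-1)$-item value $(p-1)-(j-1)+1$. Hence $u_R(1,\pi)=p+(\text{agent }1\text{'s value in the }\tilde\pi\text{-instance})$, and taking expectations gives $\overline u_1(\pi)=p+\overline u_1(\tilde\pi)$.

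For agent $i\ne 1$: let $k$ be the rank, in agent $i$'s order, of the removed item; by independence $k$ is uniform on $\{1,\dots,p\}$, and given $k$ the continuation is the fresh $\tilde\pi$-instance on the remaining $p-1$ items. There, an item of new rank $j$ is agent $i$'s old-rank-$j$ item when $j<k$, with old value $p-j+1$, one more than its fresh value $p-j$; and it is agent $i$'s old-rank-$(j+1)$ item when $j\ge k$, with old value $p-j$ equal to its fresh value. Thus $u_R(i,\pi)$ equals agent $i$'s fresh value in the $\tilde\pi$-instance plus the number of items agent $i$ receives whose new rank is at most $k-1$. Averaging over $k$: the first term contributes $\overline u_i(\tilde\pi)$; for the second, in a fixed realization an item of new rank $r$ received by agent $i$ is counted for exactly the $p-r$ values $k\in\{r+1,\dots,p\}$, and $p-r$ is precisely its fresh Borda value, so summing the count over $k$ recovers agent $i$'s fresh value, whose expectation is $\overline u_i(\tilde\pi)$. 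Dividing by $p$ gives $\overline u_i(\pi)=\overline u_i(\tilde\pi)+\tfrac1p\,\overline u_i(\tilde\pi)=\tfrac{p+1}{p}\,\overline u_i(\tilde\pi)$.

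For the running time, the same recursion holds with $1$ replaced by whatever agent $a=\pi_1$ moves first (just relabel agents), so it applies to every suffix of $\pi$. Unfolding it from the one-item policy $(\pi_p)$ — where $\overline u_i$ is $1$ for $i=\pi_p$ and $0$ otherwise — up to $\pi$, each of the $p$ levels updates the whole vector $(\overline u_1,\dots,\overline u_n)$ with one additive and $n-1$ multiplicative operations, for $O(np)$ in total. The one delicate point is the Borda bookkeeping for $i\ne 1$: checking that the ``shift by one'' of the items ranked above the removed item telescopes, over the uniform choice of $k$, to exactly $\tfrac1p$ of agent $i$'s continuation utility.
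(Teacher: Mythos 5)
Your proof is correct and follows essentially the same route as the paper's: condition on where agent $1$'s first pick falls in agent $i$'s ranking (uniform over the $p$ positions by independence), observe that the continuation is a fresh uniform instance on $p-1$ items under $\tilde\pi$, and exploit the linearity of Borda scoring. The only difference is bookkeeping: the paper introduces the probabilities $a_i(q,\pi)$ that agent $i$ gets the item of utility $q$, derives the recursion $a_i(q,\pi)=\frac{q-1}{p}a_i(q-1,\tilde\pi)+\frac{p-q}{p}a_i(q,\tilde\pi)$ and reindexes the expectation sum, whereas you decompose the realized utility pathwise as (fresh value) plus (count of received items ranked above the removed one) and swap the order of summation over $k$ and over received items — the two computations are equivalent, and your use of the independence of $k$ from the continuation instance is stated and justified.
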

\begin{proof}
Agent 1 picks her first item, giving her a utility of $p$. After that, from her perspective, it's the standard game on $p-1$ items with policy $\tilde\pi$, so she expects to get an utility of $\overline{u_1}(\tilde\pi)$. This proves the first equation. For the other agents, it is more involved. Let $i\in\{2,\ldots,n\}$ be a fixed agent. For $q\in \{1,\dots,p\}$, let $a_i(q,\pi)$ denote the probability that under policy $\pi$ agent $i$ gets the item with utility $q$. Note that this probability does not depend on the utility function but only on the ranking: it is the probability that agent $i$ gets the item of rank $p-q+1$ in her preference order. By the definition of expectation,
\begin{equation}\label{eq:expectation}
\overline{u_i}(\pi)=\sum_{q=1}^pa_i(q,\pi)q.
\end{equation}
There are three possible outcomes of the first move of agent 1 with respect to the item that has utility $q$ for agent $i$. With probability $(q-1)/p$, agent 1 has picked an item with utility less than $q$ (for agent $i$), with probability $(p-q)/p$, agent 1 has picked an item with utility more than $q$, and with probability $1/p$ it was the item of utility equal to $q$. In the first case there are only $q-2$ items of utility less than $q$ left, hence the probability for agent $i$ to get the item of utility $q$ is $a_i(q-1,\tilde\pi)$. In the second case there are still $q-1$ items of value less than $q$, hence the probability to get the item of utility $q$ is $a(q,\tilde\pi)$. In the third case, the probability to get the item of utility is zero, and together we obtain
\begin{equation}\label{eq:prob_rec}
a(q,\pi)=\frac{q-1}{p}a_i(q-1,\tilde\pi)+\frac{p-q}{p}a_i(q,\tilde\pi).
\end{equation}
Substituting this into~(\ref{eq:expectation}) yields
\begin{multline*}
\overline{u_i}(\pi) = \sum_{q=1}^p\left[\frac{q-1}{p}a_i(q-1,\tilde\pi)+\frac{p-q}{p}a_i(q,\tilde\pi)\right]q \\
=\sum_{q=1}^p\frac{(q-1)q}{p}a_i(q-1,\tilde\pi)+ \sum_{q=1}^p \frac{(p-q)q}{p}a_i(q,\tilde\pi)
\end{multline*}
In the first sum we substitute $q'=q-1$ and this yields
\begin{align*}
\overline{u_i}(\pi)
&= \sum_{q'=0}^{p-1}\frac{q'}{p}\cdot a_i(q',\tilde\pi)\cdot(q'+1)\ +\ \sum_{q=1}^p\frac{p-q}{p}\cdot a_i(q,\tilde\pi)\cdot q
\end{align*}
The first term in the first sum  and the last term in the second sum are equal to zero, so they can be omitted and we obtain
\begin{align*}
\overline{u_2}(\pi)
&= \sum_{q'=1}^{p-1}\frac{q'}{p}\cdot a_i(q',\tilde\pi)\cdot(q'+1)\ +\ \sum_{q=1}^{p-1}\frac{p-q}{p}\cdot a_i(q,\tilde\pi)\cdot q\\
&= \sum_{q=1}^{p-1}a_i(q,\tilde\pi)\left[\frac{q}{p}\cdot(q+1)+\frac{p-q}{p}\cdot q\right] \\
&= \frac{p+1}{p}\sum_{q=1}^{p-1}a_i(q,\tilde\pi)\cdot q = \frac{p+1}{p}\overline{u_2}(\tilde\pi).
\end{align*}
The time complexity follows immediately from the recursions.
\end{proof}
\begin{example}\label{exm:compute_utils_sw_recur}
Consider two agents with Borda utilities and the policies $\pi^1 = 121212$ and $\pi^2= 111222$. We compute expected utilities and expected social welfare for each of them using Lemma~\ref{lem:recursion}. Table~\ref{table:exm:rec_policies} shows
results up to two decimal places.
\begin{table}\label{table:exm:rec_policies}
\scriptsize{
  \centering
  \begin{tabular}{|c|@{}c|@{}c|@{}c|@{}c|@{}c|@{}c|@{}c|@{}c|}
       \hline
        & \multicolumn{4}{|c|}{$ \pi^1 = 121212$} &  \multicolumn{4}{|c|}{$ \pi^2 = 111222$} \\
      \hline
      & $\pi$ & $\overline{u_1}(\pi)$ & $\overline{u_2}(\pi)$ & $\overline{\sw}(\pi)$& $\pi$ & $\overline{u_1}(\pi)$ & $\overline{u_2}(\pi)$ & $\overline{\sw}(\pi)$\\
    \hline
      \hline
    1 & 2 & 0 & 1 & 1&  2 & 0 & 1 & 1\\
    2 & 12 & 2 & 1.5  & 3.5 & 22 & 0 & 3  & 3 \\
    3 & 212 & 2.67 & 4.5 & 7.17&  222 &0  & 6 & 6\\
    4 & 1212 & 6.67 & 5.63 & 12.3&  1222 & 4 & 7.5 & 11.5\\
    5 & 21212 & 8 & 10.63& 18.63&  11222 & 9 & 9& 18\\
    6 & 121212 & 14 & 12.4 & 26.4 & 111222 & 15 & 10.5 & 25.5\\
    \hline
  \end{tabular}
  \caption{ Expected utilities and expected utilitarian social welfare computation for
  $\pi^1 = 121212$ and $\pi^2= 111222$}\label{exp:exp_utils_sw}
  }
\end{table}
Note that expected values computed in all examples in the paper coincide with the results obtained by the brute-force search algorithm from~\cite{bouveretgeneral}.
\end{example}
Due to the linearity of Borda scoring the probabilities $a_i(q,\pi)$ in the proof of Lemma~\ref{lem:recursion} cancel, and this will allow us to solve recursions explicitly and to prove our main result about the optimal policy for Borda scoring in Section~\ref{sec:optimality}.

In the general case, we can still compute the expected utilities $\overline{u}(i,\pi)$, 
and thus $\overline{sw}(\pi)$, but we need the probabilities $a_i(q,\pi)$ from the proof of Lemma~\ref{lem:recursion}: $a_i(q,\pi)$ is the probability that under policy $\pi$, agent $i$ gets the item ranked at position $p-q+1$ in her preference order. Computing these probabilities using~(\ref{eq:prob_rec}) 
adds a factor of $p$ to the runtime.
\begin{lemma}\label{lem:recursion_gen}
For $n\geqslant 2$ agents, $p\geqslant 2$ items, a policy $\pi\in\mathcal P^n_p$ and an arbitrary scoring function $g$, the expected utility for agent $i$ is
\begin{align*}
\overline u_i(\pi)=\sum_{q=1}^pa_i(q,\pi)g(q)
\end{align*}
and can be computed in $O(np^2)$ time.
\end{lemma}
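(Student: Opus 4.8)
The plan is to reduce the general claim to the machinery already developed for the Borda case. The formula $\overline u_i(\pi)=\sum_{q=1}^p a_i(q,\pi)g(q)$ is simply the definition of expectation, once we establish that $a_i(q,\pi)$ --- the probability that agent $i$ obtains the item she ranks in position $p-q+1$ --- does not depend on the scoring function $g$ at all, but only on the profile of preference orders and the policy. This independence was already observed in the proof of Lemma~\ref{lem:recursion}: the identity of the item an agent picks is determined purely by the ranking (agents always take their highest-ranked available item), so the event ``agent $i$ gets the item of rank $p-q+1$'' is a well-defined event in the probability space of profiles, with a probability that makes no reference to utilities. Summing $g(q)$ weighted by these probabilities over all $q$ then gives $\overline u_i(\pi)$ by linearity of expectation.

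First I would restate that $a_i(q,\pi)$ depends only on the ranking, citing the remark in the proof of Lemma~\ref{lem:recursion}. Then I would invoke the recursion~(\ref{eq:prob_rec}), namely
\[
a_i(q,\pi)=\frac{q-1}{p}\,a_i(q-1,\tilde\pi)+\frac{p-q}{p}\,a_i(q,\tilde\pi),
\]
which was derived in that same proof by the three-case analysis of agent $1$'s first move (picking something of lower, higher, or exactly equal Borda value for agent $i$) --- an analysis that, again, only concerns ranks. The base case is $p=1$: the single agent $\pi_1=1$ gets the unique item with probability $1$, and every other agent gets it with probability $0$; equivalently one can run the recursion down to $p=1$. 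I would also note the special case $i=1$: since agent $1$ picks first and always takes her top-ranked item, $a_1(p,\pi)=1$ on the item she ranks first, consistent with the recursion and with $\overline u_1(\pi)=g(p)+\overline u_1(\tilde\pi)$.

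For the complexity bound, the recursion~(\ref{eq:prob_rec}) computes, for a fixed agent $i$, the whole vector $(a_i(1,\pi),\dots,a_i(p,\pi))$ from the corresponding vector for $\tilde\pi$ in $O(p)$ arithmetic operations; unwinding through all $p$ reductions costs $O(p^2)$ per agent, hence $O(np^2)$ over all $n$ agents. Forming $\overline u_i(\pi)=\sum_{q=1}^p a_i(q,\pi)g(q)$ is an additional $O(np)$, which is absorbed. This is exactly the ``extra factor of $p$'' over the $O(np)$ Borda bound noted just before the lemma statement, the point being that in the Borda case the linear weights let the $a_i(q,\pi)$ telescope away (as in the displayed chain at the end of the proof of Lemma~\ref{lem:recursion}), whereas for general $g$ we must carry the probabilities explicitly.

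The main obstacle is not really a difficulty but a matter of being careful about what has already been proved: one must make sure the derivation of~(\ref{eq:prob_rec}) in the proof of Lemma~\ref{lem:recursion} genuinely never used Borda-specific structure. Re-reading that derivation, the case split is on whether agent $1$'s first pick has rank below, above, or equal to the rank of agent $i$'s target item, and the resulting subproblem on $p-1$ items is again the sequential allocation game with policy $\tilde\pi$; none of this references the values $g(k)$, so the recursion is valid verbatim for arbitrary $g$. The only genuinely new content here over Lemma~\ref{lem:recursion} is the observation that, having the $a_i(q,\pi)$ in hand, the expected utility is their $g$-weighted sum --- immediate from linearity of expectation --- together with the bookkeeping of the runtime.
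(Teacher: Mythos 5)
Your proposal is correct and follows exactly the route the paper intends: the paper justifies Lemma~\ref{lem:recursion_gen} only by the remark preceding it, namely that the probabilities $a_i(q,\pi)$ depend solely on ranks, are computable via the recursion~(\ref{eq:prob_rec}) at an extra factor of $p$, and that the expected utility is then the $g$-weighted sum. Your write-up supplies the same argument with slightly more care about the base case and the $i=\pi_1$ case, so there is nothing to correct.
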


Lemma~\ref{lem:recursion_gen} allows us to resolve an open question from~\cite{bouveretgeneral}.
\begin{corollary}
For $n$ agents and an arbitrary scoring utility function $g$,
the expected utility of each agent, as well as the expected
utilitarian social welfare can be computed in polynomial time.
\end{corollary}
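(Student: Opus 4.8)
The plan is to prove Lemma~\ref{lem:recursion_gen} first, since the Corollary follows immediately once we can compute each $\overline u_i(\pi)$ in time polynomial in $n$ and $p$: the expected social welfare is just $\sum_{i=1}^n\overline u_i(\pi)$ by linearity of expectation, costing an extra factor of $n$ (already absorbed in the stated bound). So the real content is the recursion for the probabilities $a_i(q,\pi)$ and the resulting formula $\overline u_i(\pi)=\sum_{q=1}^p a_i(q,\pi)g(q)$.

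The formula $\overline u_i(\pi)=\sum_{q=1}^p a_i(q,\pi)g(q)$ is immediate from the definition of expectation once we observe, exactly as in the proof of Lemma~\ref{lem:recursion}, that $a_i(q,\pi)$ depends only on the ranking structure (the probability that agent $i$ ends up with the item she ranks in position $p-q+1$) and not on $g$. So I would reuse that observation verbatim. The point is that equation~(\ref{eq:prob_rec}) --- the case analysis on agent~1's first pick being worse than, better than, or exactly the item of utility $q$ for agent $i$ --- is valid for \emph{any} scoring function, because it was a statement purely about probabilities. For agent $i=1$ one needs the companion fact that after agent~1 takes her top item, the remaining game is the standard game on $p-1$ items with policy $\tilde\pi$, and her rank-$q$ item among the survivors corresponds to rank $q-1$ in the reduced instance; this gives $a_1(q,\pi)=a_1(q-1,\tilde\pi)$ with the base case $a_1(1,\pi^{(1)})=1$ for the length-one policy. (Strictly, for $i=1$ one should be a little careful: agent~1's first pick is deterministic, so the recursion is the degenerate ``shift'' version of~(\ref{eq:prob_rec}).)

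From there the algorithm writes itself: maintain the vector $(a_i(1,\sigma),\dots,a_i(|\sigma|,\sigma))$ as $\sigma$ ranges over the suffixes $\pi,\tilde\pi,\tilde{\tilde\pi},\dots$ obtained by successively deleting the first entry, starting from the length-one policy where the acting agent gets the unique item with probability~$1$ and every other agent with probability~$0$. Each step of the recursion~(\ref{eq:prob_rec}) updates an $O(p)$-length vector in $O(p)$ time, and there are $O(p)$ steps and $n$ agents, for $O(np^2)$ total; then one final $O(np)$ pass computes all the $\overline u_i(\pi)$ via the dot product with $g$. I would then state that the Corollary is the special case observation that $O(np^2)$ is polynomial, closing the Bouveret--Lang open question. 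The main obstacle --- and it is a mild one --- is bookkeeping the index shift in~(\ref{eq:prob_rec}) correctly at the boundaries $q=1$ and $q=p$ (the ``$q-2$ items left'' versus ``$q-1$ items left'' distinction, and the vanishing of the boundary terms), together with handling the agent-$1$ case and, more generally, the first time agent $i$ herself is the one to move, where the update is again a deterministic shift rather than the probabilistic mixture; but all of this is already implicitly handled by the proof of Lemma~\ref{lem:recursion} and just needs to be spelled out without the Borda-specific cancellation.
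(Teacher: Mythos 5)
Your route is the same as the paper's (which itself only gestures at the argument): note that $a_i(q,\pi)$ is scoring-independent, propagate the vectors $\bigl(a_i(q,\sigma)\bigr)_q$ along the suffixes $\pi,\tilde\pi,\tilde{\tilde\pi},\ldots$ via the recursion~(\ref{eq:prob_rec}), take the dot product with $g$, and observe that $O(np^2)$ is polynomial; so the corollary is indeed just Lemma~\ref{lem:recursion_gen} plus linearity of expectation. One concrete correction to your handling of the moving agent: the recursion there is not a shift $a_i(q,\pi)=a_i(q-1,\tilde\pi)$ but rather $a_i(p,\pi)=1$ and $a_i(q,\pi)=a_i(q,\tilde\pi)$ for $q<p$. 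The index $q$ in $a_i(q,\cdot)$ counts rank from the bottom (the item has $q-1$ items below it), and removing the mover's top item does not change how many items lie below any surviving item; equivalently, this is the case "an item of utility more than $q$ was removed" in the derivation of~(\ref{eq:prob_rec}), taken with probability $1$. Your shift version fails already for $\pi=11$: it gives $a_1(1,11)=0$ although agent~1 certainly receives the utility-$1$ item. Also, this deterministic case arises every time agent $i$ heads the current suffix, not only the first time she moves. With that fix the bookkeeping and the complexity count go through exactly as in the paper.
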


For some special policies, the recursions in Lemma~\ref{lem:recursion} can be solved explicitly. A particularly interesting policy is the strictly alternating one (denoted $\Alt$) $\pi=123\ldots n123\ldots n123\ldots n\ldots.$
\begin{proposition}\label{prop:expectation_alternating}
Let $\pi$ be the strictly alternating policy of length $p$ starting with $1$. The expected utilities and utilitarian social welfare for two agents and Borda scoring are
\begin{align*}
  \overline{\sw}(\pi) &= \frac13\big[(2p-1)(p+1) + \gamma_{p+1}\big] \\
 (\overline{u_1}(\pi), \overline{u_2}(\pi))  &=
\begin{cases}
\left(\frac{p(p+1)}{3},\frac{p^2-1}{3} + \frac13\gamma_{p+1}\right) & \text{if $p$ is even,} \\
\left(\frac{p(p+1)}{3} + \frac13\gamma_{p+1},\frac{p^2-1}{3}\right) & \text{if $p$ is odd.}
\end{cases}
\end{align*}
For $n$ agents 
the expectations are, for all $i\in\{1,\ldots,n\}$,
\begin{align*}
\overline u_i(\pi)&=\frac{p^2}{n+1}+O(p),  &  \overline{\sw}(\pi) &= \frac{np^2}{n+1}+O(p).
\end{align*}
\end{proposition}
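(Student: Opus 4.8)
The plan is to solve the recursions from Lemma~\ref{lem:recursion} explicitly for the strictly alternating policy, first for $n=2$ and then for general $n$. Write $\pi^{(p)}$ for the alternating policy of length $p$ starting with agent~$1$; note that $\widetilde{\pi^{(p)}}$ is then the alternating policy of length $p-1$ \emph{starting with agent~$2$}, so when unrolling the recursion one must track the roles of the two agents carefully. Using Lemma~\ref{lem:recursion}, for even $p$ we get $\overline u_1(\pi^{(p)}) = p + \frac{p}{p-1}\,\overline u_1(\pi^{(p-2)})$ after two steps of the recursion (one step of each type), and similarly for $\overline u_2$; the cleanest route is to guess the closed forms stated in the proposition and verify them by induction on $p$, checking the base cases $p=1,2$ against Example~\ref{exm:compute_utils_sw_recur} (where $\gamma_2=\gamma_3=1$, so $\overline{\sw}(\pi^{(2)})=\tfrac13[3\cdot 3+1]=\tfrac{10}{3}\approx 3.33$ — wait, the table gives $3.5$; I should instead read off $p=2$ directly from the formula and reconcile with the $\gamma$-recursion). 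The key algebraic fact driving everything is the defining relation of the $\gamma$-sequence: $\gamma_{k+2}=\frac{k+2}{k+1}\gamma_k$ (equivalently $\gamma_k=\frac{k}{k-1}\gamma_{k-2}$ for $k\geq 3$), which is exactly the multiplicative factor $\frac{p+1}{p}$ appearing in Lemma~\ref{lem:recursion} shifted by one; matching these two recursions is what makes the $\gamma$-term appear.

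For the $n=2$ case the steps are: (i) set up the two-step recursion for $\overline u_1$ and $\overline u_2$ separately for $p$ even and $p$ odd; (ii) solve the $\overline u_1$ recursion, which has no multiplicative distortion in the step where agent~1 moves first, giving the clean polynomial $\tfrac{p(p+1)}{3}$ plus possibly a $\gamma$ correction depending on parity; (iii) solve the $\overline u_2$ recursion, where the accumulated product $\prod \frac{j+1}{j}$ telescopes but leaves a residual sum that, via the identity above, collapses to $\tfrac13\gamma_{p+1}$; (iv) add the two to get $\overline{\sw}(\pi) = \tfrac13[(2p-1)(p+1)+\gamma_{p+1}]$, and check this sum is parity-independent even though the split between the agents is not. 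I would present (ii)–(iii) as an induction so that the $\gamma$-recursion is invoked cleanly rather than summing a series by hand.

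For general $n$ the goal is only the asymptotic statement $\overline u_i(\pi)=\frac{p^2}{n+1}+O(p)$. Here I would argue as follows: by symmetry of the uniform profile distribution the leading term must be the same for every agent (any lower-order asymmetry is absorbed into $O(p)$), and the total social welfare satisfies $\overline{\sw}(\pi)=\sum_i \overline u_i(\pi)$, so it suffices to show $\overline{\sw}(\pi)=\frac{np^2}{n+1}+O(p)$. For this one can use Lemma~\ref{lem:recursion_gen} together with the probability recursion~(\ref{eq:prob_rec}): in the alternating policy with many items, the probability $a_i(q,\pi)$ that agent $i$ ends up with the item she ranks at position $p-q+1$ is, to leading order, governed by a simple fluid/continuous approximation in which each of the $n$ agents removes items at rate $1/n$, so that agent $i$'s $r$th pick is (in expectation) the item she ranks around position $nr - c$ for a bounded constant $c$; summing the Borda values $g(q)=p-q+1$ over her $\lceil p/n\rceil$ picks gives $\frac{p^2}{n+1}+O(p)$. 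The main obstacle is making this fluid approximation rigorous with an honest $O(p)$ error bound — i.e., controlling the variance of the positions at which items get removed, or equivalently bounding how far $a_i(q,\pi)$ can deviate from a step function — rather than the bookkeeping, which is routine. An alternative that avoids probabilistic estimates is to set up and solve (or bound) the $n$-agent analogue of the Lemma~\ref{lem:recursion} recursion directly, extracting only the $p^2$ coefficient; I would pursue whichever is shorter, but I expect the recursion-bounding route to be cleaner since it reuses machinery already in the paper.
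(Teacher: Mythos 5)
Your plan for the two-agent case --- guess the closed forms and verify them by induction on $p$ via the recursions of Lemma~\ref{lem:recursion} --- is exactly what the paper does, and it works. But two of your supporting claims are wrong as stated, and they are precisely the parity bookkeeping the induction must get right. First, $\gamma_3=\tfrac32$, not $1$ (only $\gamma_1=\gamma_2=1$); with the correct value the formula gives $\overline{\sw}(\pi^{(2)})=\tfrac13[9+\tfrac32]=3.5$, matching Table~\ref{table:exm:rec_policies}, so there is nothing to ``reconcile''. Second, the identity $\gamma_{k+2}=\frac{k+2}{k+1}\gamma_k$ holds only for odd $k$; for even $k$ one has $\gamma_{k+2}=\frac{k+1}{k}\gamma_k$ (equivalently, $\gamma_{k+1}=\gamma_k$ for odd $k$ and $\gamma_{k+1}=\frac{k+1}{k}\gamma_k$ for even $k$, which is the recursion~(\ref{eq:overlinegamma_recursion}) for $\overline\gamma$ used in the appendix). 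Once these are corrected the induction goes through, so I regard this part as essentially right.

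The $n$-agent part has a genuine gap. Your primary route (the fluid approximation) is, by your own admission, not a proof: bounding how far $a_i(\cdot,\pi)$ deviates from a step function well enough to certify an $O(p)$ error in $\sum_q a_i(q,\pi)g(q)$ is the entire difficulty, and you do not supply it. The symmetry argument you lean on is also invalid: the agents are \emph{not} symmetric under the alternating policy (agent $1$ picks first in every round), and nothing a priori rules out different $p^2$-coefficients for different agents --- that all agents share the coefficient $\frac{1}{n+1}$ is a conclusion, not a hypothesis, so you cannot reduce to computing $\overline{\sw}$ alone. The route that actually works is your alternative (b), but the key idea is missing from your sketch: decouple the recursion to $u_{1p}=p+\frac{p}{p-n+1}\,u_{1,p-n}$, observe that it can be solved \emph{exactly} on one residue class per agent, namely $u_{ip}=\frac{(p-i+1)(p+1)}{n+1}$ whenever $p\equiv i-1\pmod n$, and then sandwich the remaining values of $p$ between $u_{i,p-k}$ and $u_{i,p+(n-k)}$ using monotonicity in $p$, which yields $\frac{p^2}{n+1}-2p\leqslant u_{ip}\leqslant\frac{p^2}{n+1}+2p+n$. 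Without the exact solution on a residue class and the interpolation step, ``extract the $p^2$ coefficient from the recursion'' is a statement of intent rather than an argument.
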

\begin{proof}
A proof for the two agents case can be done by  straightforward induction using the recursions from Lemma~\ref{lem:recursion}. For $n$ agents we outline the proof. The full proof is given in the Appendix \ref{sec:proof_lemma_recursion_borda_gen} of the online version~\cite{tronline}.
First, we solve the recursions for the expected utility of agent $i$
when the number of items is $p\equiv i-1\pmod n$. Using these values, we approximate the expected utility for the remaining combinations of $p$ and $i$.
\end{proof}
\begin{example}
Consider the policy $\pi^1$ from Example~\ref{exm:compute_utils_sw_recur}.
Using Proposition~\ref{prop:expectation_alternating},
 we get: $\overline{u_1}(\pi) = \frac{6(6+1)}{3} = 14$, $\overline{u_2}(\pi) = \frac{6^2-1}{3} + \frac13\gamma_{6+1} =11.67 + 2.19/3 = 12.4$ and $\overline{\sw}(\pi) = 26.4$. These values coincide with results in Table~\ref{table:exm:rec_policies}. 
\end{example}
For a fixed number $n$ of agents, we write the number of items as $p=kn+r$ with $0\leqslant r<n$. We call a policy $\pi=\pi_1\ldots\pi_p$ \emph{balanced} if $\{\pi_{in+1},\ldots,\pi_{in+n}\}=\{1,2,\ldots,n\}$ for all $i\in\{0,\ldots,k-1\}$ and $\left\lvert\{\pi_{kn+1},\ldots,\pi_{kn+r}\}\right\rvert=r$. For any balanced policy, the expected utility of any agent lies between that of agents 1 and $n$ in the alternating policy. Thus every agent has expected utility $p^2/(n+1)+O(p)$.


\section{Comparison with Other Mechanisms}\label{sec:comparison}
We compare with two other allocation mechanisms that
provide insight into the efficiency of the alternating
policy. The best preference mechanism ($\BestPref$)
allocates each item to the agent who prefers it most
breaking ties at random. The random mechanism ($\Random$)
allocates items by flipping a coin for each individual item.
\begin{proposition}\label{prop:expectaition_bestpref}
For two agents, and Borda utilities, the expected utilitarian social
welfare of  $\BestPref$ is $\frac{(p+1)(4p-1)}{6}.$
For $n$ agents, it is $\frac{np^2}{n+1}+\frac{p}{2}+O(1).$
\end{proposition}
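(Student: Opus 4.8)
The plan is to use linearity of expectation to reduce everything to a single item. Fix an item $j$ and, for each agent $i$, let $a_i$ be the rank (position) of item $j$ in agent $i$'s preference order, so the Borda utility agent $i$ assigns to $j$ is $g(a_i)=p-a_i+1$. Under $\BestPref$, item $j$ goes to an agent attaining $\min_i a_i$, with ties broken at random; but regardless of how such a tie is resolved, the contribution of item $j$ to the utilitarian social welfare is exactly $g\!\left(\min_i a_i\right)=p+1-\min_i a_i$. Since the preference orders are drawn independently and uniformly, the positions $a_1,\dots,a_n$ are independent and each uniform on $\{1,\dots,p\}$. Writing $M_n=\min(a_1,\dots,a_n)$ and summing over the $p$ items,
\[
\overline{\sw}(\BestPref)=p\,\big(p+1-\expect{M_n}\big).
\]

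Next I would evaluate $\expect{M_n}$ by the tail-sum formula: since $\prob{M_n\geqslant k}=\prob{a_1\geqslant k}^n=\big((p-k+1)/p\big)^n$,
\[
\expect{M_n}=\sum_{k=1}^p\prob{M_n\geqslant k}=\frac{1}{p^n}\sum_{m=1}^p m^n.
\]
For $n=2$ this is $\frac{1}{p^2}\cdot\frac{p(p+1)(2p+1)}{6}=\frac{(p+1)(2p+1)}{6p}$, and substituting gives $\overline{\sw}=p(p+1)-\frac{(p+1)(2p+1)}{6}=\frac{(p+1)(4p-1)}{6}$, as claimed. For general $n$, Faulhaber's formula gives $\sum_{m=1}^p m^n=\frac{p^{n+1}}{n+1}+\frac{p^n}{2}+O(p^{n-1})$, hence $\expect{M_n}=\frac{p}{n+1}+\frac12+O(1/p)$, and therefore $\overline{\sw}=p^2+p-\frac{p^2}{n+1}-\frac{p}{2}+O(1)=\frac{np^2}{n+1}+\frac{p}{2}+O(1)$.

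The computation is essentially routine; the two points that need care are (i) the observation that the welfare contribution of an item depends only on $\min_i a_i$ and not on the (random) tie-break, so that the whole calculation is independent of the tie-breaking rule, and (ii) retaining enough terms of the power-sum expansion to pin down the $\frac{p}{2}$ term while safely absorbing the remainder into $O(1)$. If one wanted an exact closed form for every $n$ rather than the asymptotic statement, one could keep the full Faulhaber polynomial in place of the $O(1)$ remainder, but the stated estimate suffices here.
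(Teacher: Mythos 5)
Your proof is correct. It rests on the same basic decomposition as the paper's --- linearity of expectation over items, reducing everything to the distribution of the extremum of $n$ independent uniform positions of a fixed item --- but the execution differs in two ways that make it arguably cleaner. First, the paper works with the maximum of the Borda utilities, computes the mass function $\prob{X_q=j}=(j/p)^n-((j-1)/p)^n$ explicitly, and then evaluates $\sum_j j\,\prob{X_q=j}$ by a summation-by-parts manipulation into power sums; you instead work with the minimum $M_n$ of the ranks and invoke the tail-sum formula $\expect{M_n}=\sum_k\prob{M_n\geqslant k}=\frac{1}{p^n}\sum_{m=1}^p m^n$, which reaches the same power sum in one step. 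Second, the paper proves the two-agent case by a separate combinatorial count over permutations (splitting on $a_i\leqslant i$ versus $a_i>i$), whereas you obtain it as the $n=2$ instance of the general formula, since $\sum_{m=1}^p m^2$ has an exact closed form; this unifies the two halves of the proposition. The two points you flag as needing care are exactly the right ones --- the welfare contribution of an item is $g(\min_i a_i)$ regardless of the random tie-break, and for a fixed item the ranks $a_1,\ldots,a_n$ are independent and uniform on $\{1,\ldots,p\}$ (dependence between items is irrelevant by linearity) --- so there is no gap.
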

\begin{proof}
Due to space limitations we only present a proof for two agents. The proof for $n$ agents is again given in
the online Appendix \ref{sec:proof_prop_expectaition_bestpref}.
Let $S_p$ be the set of all permutations of $\{1,2,\ldots,p\}$. Then the expected utilitarian social welfare is
\[\frac{1}{p!}\sum_{a\in S_p}\sum_{i=1}^p\max\{i,a_i\} = \frac{1}{p!}\sum_{i=1}^p\sum_{a\in S_p}\max\{i,a_i\}.\]
We split the inner sum into two sums: one for all permutations $a=(a_1,\ldots,a_p)$ with $a_i\leqslant i$, and one for the remaining permutations. Hence, we get
\[\frac{1}{p!}\sum_{i=1}^p\left[\sum_{a\in S_p\,:\,a_i\leqslant i}i+\sum_{a\in S_p\,:\,a_i> i}a_i\right].\]
We compute the value of each inner sum separately. In the first sum each term equals $i$, so we have to determine the number of terms. For $a_i$ there are $i$ possible values $1,2,\ldots,i$, and for a fixed value of $a_i$ there are $(p-1)!$ permutations of the remaining values. So the first sum has $(p-1)!i$ terms of value $i$, hence it equals $(p-1)!i^2$. The second sum contains for each $j\in\{i+1,\ldots,p\}$ exactly $(p-1)!$ terms of value $a_i=j$, hence it equals $(p-1)!\sum_{j=i+1}^pj=(p-1)!(p(p+1)/2-i(i+1)/2)$.
So the expected utilitarian social welfare is
\begin{multline*}
\frac{1}{p!}\sum_{i=1}^p (p-1)!\left[i^2+\frac12p(p+1)-\frac12i(i+1)\right] \\
=\frac{1}{2p}\sum_{i=1}^p\left[p(p+1)+i^2-i\right]=\frac{(p+1)(4p-1)}{6}.\qedhere
\end{multline*}
\end{proof}
\begin{proposition}\label{prop:expectaition_random}
For $n$ agents, and Borda utilities, the expected utilitarian social
welfare of  $\Random$ is $\frac{p(p+1)}{2}.$
\end{proposition}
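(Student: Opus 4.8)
The plan is to use the defining feature of $\Random$: it decides the recipient of each item by coin flips that are entirely independent of the agents' preference orders. Writing the welfare of a profile $R$ under a fixed assignment $\sigma\colon\{1,\dots,p\}\to\{1,\dots,n\}$ of items to agents as a sum over items,
\[
\sw = \sum_{j=1}^p g\big(\rk_{\sigma(j)}(j)\big),
\]
where $\rk_i(j)$ denotes the rank of item $j$ in agent $i$'s preference order, linearity of expectation reduces the claim to computing, for each item $j$, the expected utility it contributes, the expectation being taken over both the random profile and the coin flips.

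Next I would fix an item $j$ and condition on the coin flips, which determine its recipient $i=\sigma(j)$. Since these coin flips do not consult $R$, the recipient $i$ is independent of the profile. Conditioned on $\sigma(j)=i$, item $j$ contributes $g\big(\rk_i(j)\big)$, and because the preference orders are drawn uniformly at random, $\rk_i(j)$ is uniform on $\{1,\dots,p\}$; for Borda utilities, $g(k)=p-k+1$, so the conditional expected contribution is $\frac1p\sum_{k=1}^p(p-k+1)=\frac{p+1}{2}$, a quantity that does not depend on $i$. Averaging over the coin flips therefore leaves the expected contribution of item $j$ equal to $\frac{p+1}{2}$.

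Summing over all $p$ items then gives $\overline{\sw}(\Random)=p\cdot\frac{p+1}{2}=\frac{p(p+1)}{2}$. There is essentially no obstacle here; the only subtlety is the independence of the assignment from the realized preferences, which is exactly what makes every item worth the average Borda score $\frac{p+1}{2}$ in expectation. In fact the same argument shows that \emph{any} preference-blind allocation rule --- one whose output is a function of the coin flips alone, including any deterministic assignment --- has expected utilitarian social welfare $\frac{p(p+1)}{2}$ under Borda utilities, which puts the performance of $\Random$ in context relative to the alternating policy of Proposition~\ref{prop:expectation_alternating}.
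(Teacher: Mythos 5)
Your proof is correct and takes essentially the same route as the paper: a one-line linearity-of-expectation computation resting on the independence of the coin flips from the preference profile. The paper decomposes by agent (each agent receives the item it values at $i$ with probability $1/n$, giving $\sum_{i=1}^p i/n$ per agent, then sums over the $n$ agents), while you decompose by item and use that the recipient's rank of a fixed item is uniform on $\{1,\dots,p\}$; this is an immaterial difference, and your closing remark that any preference-blind rule attains the same expected welfare is a correct bonus observation.
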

\begin{proof}
As the probability of each agent obtaining the $i$th item is $1/n$, the expected utilitarian social welfare is $n\sum_{i=1}^p\frac{i}{n}= \frac{p(p+1)}{2}.$
\end{proof}


Table~\ref{table:sw_diff_mechanisms} summarizes the expected utilitarian social welfares for these mechanisms.
\begin{table}
  \centering
  \scriptsize{
  \begin{tabular}{|c|c|c|}
    \hline
    \multicolumn{3}{|c|}{ $ \overline{\sw}(\pi)$ } \\
    \hline
   \Alt &\BestPref & \Random \\
   \hline
 $\frac{np^2}{n+1}+O(p)$ & $\frac{np^2}{n+1}+\frac{p}{2}+O(1)$&  {$\frac{p^2+p }{2}$}\\
    \hline
  \end{tabular}
  }
  \caption{Expected utilitarian social welfare for different mechanisms.}\label{table:sw_diff_mechanisms}
\end{table}
Clearly, $\BestPref$ is an upper bound on the expected utilitarian social welfare for any allocation mechanism. As in~\cite{bouveretgeneral},
we define asymptotic optimality of a sequence of policies $(\pi^{(p)})_{p=1,2,\ldots}$ where $\pi^{(p)}$ is a policy for $p$ items by
\[\lim\limits_{p\to\infty}\frac{\overline{\sw}(\pi^{(p)})}{\max_{\pi \in \mathcal P_p}\overline{\sw}(\pi)}=1.\]
As can be seen from the table,  $\Alt$ is an asymptotically optimal policy. By the observation in the end of the previous section the same is true for any balanced policy, and this implies Proposition 5 in~\cite{bouveretgeneral}. However, the proof in \cite{bouveretgeneral} is
incorrect as it implies that the expected utility is $p^2/n+O(1)$ for every agent which contradicts our upper bound for \BestPref. See Appendix \ref{sec:bouveret_lang_gap} 
for a detailed discussion of the gaps in the proof.
Of course, for any given $p$ and preference
orderings, $\Alt$ may not give the maximal utilitarian social welfare possible.
\begin{example}
Consider two agents and six items with the following preferences: $1 > 2 > 3 > 4 > 5 > 6$ and $1 > 6 > 2 > 3 > 4 > 5$. The $\Alt$ policy gives items $\{1,2,4\}$  and $\{6,3,5\}$ to agents 1 and 2 respectively. Hence, the total welfare is $(6+5+3) + (5+3+1) = 23$. Consider a policy $\pi = 121111$ which gives the following items to agents: $\{1,2,3,4,5\}$  and $\{6\}$. The total welfare is now $(6+5+4+3+2) + (6) = 25$. 
\end{example}
The $\Random$ mechanism gives the worst expected utilitarian social
welfare among the three mechanisms. Moreover, as $n$ increases the expected utilitarian social welfare produced by  $\Random$ declines compared with the other two mechanisms: $\lim\limits_{p\to\infty}\frac{\overline{\sw}(\Random^{(p)})}{\overline{\sw}(\BestPref^{(p)})}=\frac{n+1}{2n}$.

With two agents, the expected loss using
$\Alt$ compared to $\BestPref$ (which requires full
revelation of the preference orders)
is less than $p/6$.  In particular, with high probability
$\Alt$ yields an utilitarian social welfare very close to the upper bound.  
\begin{proposition}
For two agents and any $\varepsilon>0$, with probability at least $1-\varepsilon$,
$\Alt$ is a $(1-\frac{1}{3p\varepsilon})$-approximation of the optimal expected utilitarian social welfare.
\end{proposition}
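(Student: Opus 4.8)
The plan is to avoid analysing the fluctuations of $\sw_R(\Alt)$ directly and instead to bound the \emph{shortfall} of $\Alt$ below the ex-post optimum, which is a nonnegative random variable whose expectation is already known. Throughout we have two agents and Borda utilities, as in the statement. For a profile $R$ put $Z_R=\sw_R(\BestPref)-\sw_R(\Alt)$. Since $\BestPref$ hands every item to an agent who values it most, $\sw_R(\BestPref)$ is the largest utilitarian social welfare attainable on $R$ by any allocation, so in particular $\sw_R(\BestPref)\ge\sw_R(\pi)$ for every policy $\pi$; hence $Z_R\ge 0$ for all $R$, and $Z_R$ is exactly the amount by which $\Alt$ misses the optimum on that profile. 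Interpreting ``optimal'' as this per-profile optimum (which upper-bounds every mechanism, hence every policy), it suffices to show $\sw_R(\Alt)\ge(1-\tfrac1{3p\varepsilon})\,\sw_R(\BestPref)$ with probability at least $1-\varepsilon$.

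First I would evaluate $\expect{Z}=\overline{\sw}(\BestPref)-\overline{\sw}(\Alt)$ by substituting Propositions~\ref{prop:expectaition_bestpref} and~\ref{prop:expectation_alternating}; the $\gamma_{p+1}$ and lower-order terms almost cancel and one obtains $\expect{Z}=\frac{p+1-2\gamma_{p+1}}{6}\le\frac{p-1}{6}<\frac p6$, using only $\gamma_{p+1}\ge\gamma_1=1$. (This is precisely the ``expected loss less than $p/6$'' remark preceding the proposition.) Markov's inequality then gives $\prob{Z\ge\frac p{6\varepsilon}}\le\frac{6\varepsilon}{p}\expect{Z}<\varepsilon$, so with probability at least $1-\varepsilon$ we have $\sw_R(\Alt)=\sw_R(\BestPref)-Z_R>\sw_R(\BestPref)-\frac p{6\varepsilon}$.

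To turn this additive guarantee into the stated multiplicative one I would use a deterministic lower bound on the optimum: under $\BestPref$ each item goes to an agent valuing it at least as much as agent $1$ does, and agent $1$'s Borda scores over the $p$ items form a permutation of $\{1,\dots,p\}$, so $\sw_R(\BestPref)\ge 1+2+\cdots+p=\frac{p(p+1)}2\ge\frac{p^2}2$ for \emph{every} profile $R$. Since $\frac p{6\varepsilon}=\frac1{3p\varepsilon}\cdot\frac{p^2}2\le\frac1{3p\varepsilon}\,\sw_R(\BestPref)$, substitution yields, with probability at least $1-\varepsilon$,
\[
\sw_R(\Alt)>\sw_R(\BestPref)-\tfrac1{3p\varepsilon}\,\sw_R(\BestPref)=\Bigl(1-\tfrac1{3p\varepsilon}\Bigr)\sw_R(\BestPref),
\]
which is the claim (the statement is vacuous unless $3p\varepsilon>1$, so nothing further need be checked). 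I do not expect a real obstacle here: the only things to get right are the choice of the nonnegative variable $Z_R$ — so that its mean is an already-computed quantity and Markov applies cleanly — and the Borda-specific estimate $\sw_R(\BestPref)\ge p^2/2$, which is exactly what upgrades the $\Theta(p/\varepsilon)$ additive slack into a $\Theta(1/(p\varepsilon))$ relative one.
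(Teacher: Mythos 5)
Your proof is correct and follows essentially the same route as the paper: it introduces the nonnegative shortfall $z_2-z_1=\sw_R(\BestPref)-\sw_R(\Alt)$, bounds its expectation by $p/6$ using Propositions~\ref{prop:expectation_alternating} and~\ref{prop:expectaition_bestpref}, applies Markov's inequality, and converts the additive slack into the multiplicative bound via the deterministic estimate $\sw_R(\BestPref)\geqslant p^2/2$. The only difference is that you make explicit the last step, which the paper leaves implicit in the expression $\frac{p^2/2-p/(6\varepsilon)}{p^2/2}$.
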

\begin{proof}
Let the random variables $z_1$ and $z_2$ denote the utilitarian social welfare for $\Alt$ and $\BestPref$. Then $z_2-z_1\geqslant 0$ and for the expectations we have $\textbf{E}(z_1) = \frac{2p^2+p-1+\gamma_p}{3}>\frac{2p^2+p}{3}$ and $ \textbf{E}(z_2) = \frac{4p^2+3p-1}{6}$.

So $\textbf{E}(z_2-z_1)<p/6$, and by Markov's inequality
\[\textbf{P}\left(z_2-z_1\geqslant\frac{p}{6\varepsilon}\right)\leqslant\frac{p/6}{p/(6\varepsilon)}=\varepsilon.\]
Writing it multiplicatively, with probability at least $1-\varepsilon$,
\[\frac{z_1}{z_2}>\frac{p^2/2-p/(6\varepsilon)}{p^2/2}=1-\frac{1}{3p\varepsilon}.\qedhere\]
\end{proof}

A similar result holds for more than two agents.
\begin{proposition}\label{prop:error_estimate_many_agents}
For $n$ agents, there exists a constant $C$ such that for every $\varepsilon>0$ with probability at least $1-\varepsilon$ $\Alt$ is a $(1-\frac{C}{p\varepsilon})$-approximation of the optimal expected utilitarian social welfare.
\end{proposition}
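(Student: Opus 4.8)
The plan is to mimic the two-agent argument but with the asymptotics supplied by Proposition~\ref{prop:expectation_alternating} and Proposition~\ref{prop:expectaition_bestpref}. Let the random variables $z_1$ and $z_2$ denote the (random) utilitarian social welfare obtained under $\Alt$ and under $\BestPref$ respectively, on a profile drawn uniformly at random. Since $\BestPref$ assigns each item to an agent who values it most, pointwise on every profile we have $z_2 \geqslant z_1 \geqslant 0$, and moreover $z_2 \geqslant \sw_R(\pi)$ for \emph{every} policy $\pi$, so $\textbf{E}(z_2) = \max_\pi \overline{\sw}(\pi) \cdot (1+o(1))$ is — up to lower-order terms — the optimal expected utilitarian social welfare we are comparing against.

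First I would record the two expectations. By Proposition~\ref{prop:expectaition_bestpref}, $\textbf{E}(z_2) = \frac{np^2}{n+1} + \frac p2 + O(1)$; by Proposition~\ref{prop:expectation_alternating} (applied per agent and summed), $\textbf{E}(z_1) = \overline{\sw}(\Alt) = \frac{np^2}{n+1} + O(p)$. Subtracting, $\textbf{E}(z_2 - z_1) = O(p)$, so there is a constant $C_1$ (depending only on $n$) with $\textbf{E}(z_2-z_1) \leqslant C_1 p$ for all $p$. Since $z_2 - z_1 \geqslant 0$, Markov's inequality gives
\[
\textbf{P}\!\left(z_2 - z_1 \geqslant \frac{C_1 p}{\varepsilon}\right) \leqslant \varepsilon.
\]
Thus with probability at least $1-\varepsilon$ we have $z_1 \geqslant z_2 - C_1 p/\varepsilon$.

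Next I would convert this additive gap into a multiplicative approximation. On the event above, dividing by $z_2$ and using that $z_2$ is at least, say, $\frac{np^2}{n+1}(1 - o(1)) \geqslant c\, p^2$ for a constant $c>0$ and all large $p$ (this lower bound holds deterministically up to lower-order terms because even the worst-case welfare of $\Alt$ is $\Theta(p^2)$, or one can simply use $z_2 \geqslant z_1$ together with the crude bound $z_1 \geqslant$ agent~1's guaranteed $\Theta(p^2)$), we get
\[
\frac{z_1}{z_2} \geqslant 1 - \frac{C_1 p/\varepsilon}{z_2} \geqslant 1 - \frac{C_1 p/\varepsilon}{c\,p^2} = 1 - \frac{C_1/c}{p\varepsilon}.
\]
Setting $C = C_1/c$ yields the claim, since $z_2$ dominates the expected utilitarian social welfare of every policy and hence $z_1/z_2$ lower-bounds the approximation ratio of $\Alt$ against the optimum.

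The main obstacle is the denominator bound: Markov's inequality only controls $z_2 - z_1$ from above, so to pass to a ratio one needs a reliable \emph{lower} bound on $z_2$ (equivalently on $z_1$) that holds on the relevant high-probability event, not merely in expectation. The cleanest fix is to observe that $z_1 \geqslant u_R(1,\Alt)$ and that agent~1's realized Borda utility under the alternating policy is deterministically at least $\sum_{j=0}^{\lceil p/n\rceil - 1}(p - jn - (n-1)) = \Theta(p^2/n)$ (she always gets a currently-top item, whose rank in her order is at most the number of items already removed plus one), so $z_2 \geqslant z_1 = \Omega(p^2)$ with probability one — removing any need for a concentration argument on the denominator. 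For $n$ this large-$p$ regime suffices because the constant $C$ is allowed to depend on $n$, and for the finitely many small $p$ the statement is vacuous or can be absorbed into $C$.
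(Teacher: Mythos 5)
Your proposal is correct and follows essentially the same route as the paper's two-agent proof (Markov's inequality on $z_2-z_1\geqslant 0$ with $\textbf{E}(z_2-z_1)=O(p)$ from Propositions~\ref{prop:expectation_alternating} and~\ref{prop:expectaition_bestpref}, then a deterministic $\Omega(p^2)$ lower bound on the denominator), which is exactly what the paper intends when it states that "a similar result holds" without writing out the $n$-agent case. Your care with the pointwise lower bound on $z_2$ is the one step the two-agent proof passes over quickly; an even simpler choice is $z_2=\sum_q\max_i\alpha_{iq}\geqslant\sum_q\alpha_{1q}=p(p+1)/2$, which holds on every profile and for every $n$.
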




\section{Optimality of the Alternating Policy}\label{sec:optimality}

\begin{figure*}[!htbp]  \centering
    \includegraphics[width=1\textwidth]{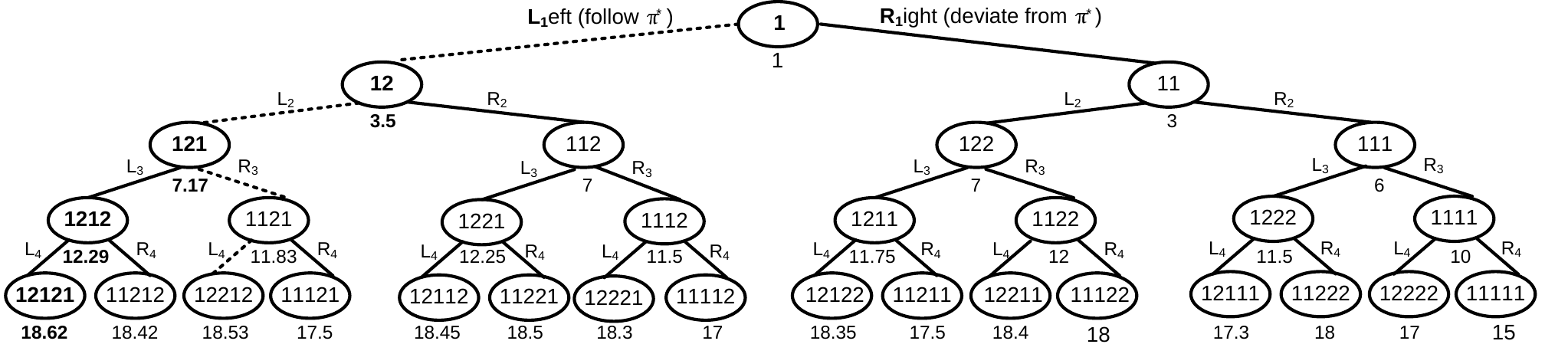}\\
    \caption{The policy tree of depth 5. \label{fig:policies_tree}}
\end{figure*}
We now consider the problem of finding the policy
that maximizes the expected utilitarian social welfare for Borda utilities.
Bouveret and Lang~\citeyear{bouveretgeneral}
stated that this is an open question,
and conjectured that this problem is NP-hard.
We close this problem, by proving that $\Alt$ is
in fact the optimal policy for {\em any} given $p$ with two agents.
\begin{theorem}\label{thm:optimality}
The expected utilitarian social welfare is maximized by the alternating policy for two agents supposing Borda utilities and
the full independence assumption.
\end{theorem}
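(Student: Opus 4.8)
The plan is to turn the maximisation into an explicit recursion via Lemma~\ref{lem:recursion}, and then prove optimality of $\Alt$ by induction, using a hypothesis strong enough to handle the fact that the recursion for $\overline{\sw}$ mixes the two agents' expected utilities. Unfolding Lemma~\ref{lem:recursion}, for $\pi\in\mathcal P^2_p$ with $\pi_1=1$ and $b$ the other agent,
\[\overline{\sw}(\pi)=p+\overline{\sw}(\tilde\pi)+\tfrac1p\,\overline{u_b}(\tilde\pi),\]
and symmetrically when $\pi_1=2$; so maximising $\overline{\sw}(\pi)$ forces us to maximise $\overline{\sw}(\tilde\pi)+\tfrac1p\overline{u_b}(\tilde\pi)$, a \emph{weighted} combination of the two utilities, which means a bare induction on $\overline{\sw}$ cannot close. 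I would therefore work with $\Psi_q(\alpha,\beta)=\max_{\pi\in\mathcal P^2_q}\big(\alpha\,\overline{u_1}(\pi)+\beta\,\overline{u_2}(\pi)\big)$, which by Lemma~\ref{lem:recursion} (splitting on $\pi_1$) satisfies
\[\Psi_q(\alpha,\beta)=\max\Big\{\alpha q+\Psi_{q-1}\big(\alpha,\tfrac{q+1}{q}\beta\big),\ \ \beta q+\Psi_{q-1}\big(\tfrac{q+1}{q}\alpha,\beta\big)\Big\},\qquad\Psi_0\equiv0 .\]
Since $\Psi_q$ is symmetric up to relabelling agents, $\max_{\pi_1=1}\overline{\sw}(\pi)=\Psi_p(1,1)$, so it suffices to understand $\Psi_p$ near the diagonal $\alpha=\beta$.

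The induction I would run is: \emph{for weights whose ratio $\max(\alpha,\beta)/\min(\alpha,\beta)$ lies in a band that tightens as $q$ grows, $\Psi_q(\alpha,\beta)$ is attained — uniquely when $\alpha\neq\beta$ — by the alternating policy of length $q$ that starts with the heavier agent.} Two easy ingredients keep it going. First, a label‑swap reduction: relabelling the agents in $\pi$ changes the objective by $(\alpha-\beta)\big(\overline{u_2}(\pi)-\overline{u_1}(\pi)\big)$, so one may assume the optimum starts with the heavier agent and gives that agent the larger utility; this selects the ``heavier‑picks‑first'' branch of the recursion. Second, that branch stays inside the band: if $\beta\ge\alpha$ and $\pi_1=2$, the new weights $(\tfrac{q+1}{q}\alpha,\beta)$ have agent~$1$ heavier with ratio $\le\tfrac{q+1}{q}\le\tfrac{q}{q-1}$, so the inductive hypothesis applies to $\Psi_{q-1}(\tfrac{q+1}{q}\alpha,\beta)$ and yields the alternating policy starting with agent~$1$; prepending the first move of agent~$2$ gives exactly the alternating policy. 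The base case $q=1$ is immediate, and $(\alpha,\beta)=(1,1)$ sits on the boundary of the band, so the claim applies to $\Psi_p(1,1)$.

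The hard part will be verifying that the ``heavier‑picks‑first'' branch really dominates the other one, with equality precisely at $\alpha=\beta$. Because the two branches coincide at $\alpha=\beta$ by symmetry of $\Psi$, this comparison is tight, so no crude estimate (e.g.\ $\overline{u_b}(\tilde\pi)\le\overline{\sw}(\tilde\pi)$) is strong enough. I expect to resolve it by substituting the closed forms of $\overline{u_1},\overline{u_2}$ for the alternating policy from Proposition~\ref{prop:expectation_alternating}, after which the required inequality becomes a concrete statement about the sequence $\gamma_k$ — essentially a ratio/monotonicity bound for $\gamma_k$ that follows directly from $\gamma_k=\prod_{j\le\lfloor(k-1)/2\rfloor}\frac{2j+1}{2j}$. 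A small auxiliary lemma I anticipate needing: writing $D_m:=\prod_{m'>m,\ \sigma(m')\neq\sigma(m)}\frac{m'+1}{m'}$ with $\sigma(m)=\pi_{p-m+1}$ (so $\overline{\sw}(\pi)=\sum_m mD_m$), the values $D_m$ for the alternating policy are strictly decreasing in $m$; this follows by downward induction from $D_mD_{m+1}=\frac{p+1}{m+1}$ together with the elementary inequality $(m+1)(m+3)<(m+2)^2$.

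The remaining subtlety is that the \emph{other} branch, $\alpha q+\Psi_{q-1}(\alpha,\tfrac{q+1}{q}\beta)$, has weights that may leave the band, so the inductive hypothesis does not directly evaluate it; one needs the complementary structural fact that for mildly imbalanced weights the optimal policy is still ``almost alternating'' — the favoured agent takes a short prefix of consecutive items and the policy alternates thereafter — with the prefix length $0$ exactly inside the band. Establishing this and bounding the off‑band branch by it is the main bookkeeping obstacle; once combined with the tight $\gamma_k$ inequality of the previous paragraph and the strictness at $\alpha\neq\beta$, one concludes $\max_{\pi_1=1}\overline{\sw}(\pi)=\overline{\sw}(\Alt)$, with $\Alt$ the unique maximiser.
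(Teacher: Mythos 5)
Your setup is sound and genuinely different from the paper's: you correctly unfold Lemma~\ref{lem:recursion} into $\overline{\sw}(\pi)=p+\overline{\sw}(\tilde\pi)+\tfrac1p\overline{u_b}(\tilde\pi)$, correctly diagnose that the cross term $\tfrac1p\overline{u_b}(\tilde\pi)$ is why a bare induction on $\overline{\sw}$ cannot close (the paper makes the same diagnosis, via its Example on $A_{10}$), and your weighted value function $\Psi_q(\alpha,\beta)$ with the recursion $\Psi_q(\alpha,\beta)=\max\{\alpha q+\Psi_{q-1}(\alpha,\tfrac{q+1}{q}\beta),\ \beta q+\Psi_{q-1}(\tfrac{q+1}{q}\alpha,\beta)\}$ is a legitimate alternative strengthening (the paper instead strengthens by tracking whole paths in its policy tree via the maps $G_{km}$ and $F_{km}$ and proving $x'+y'\leqslant 0$ along all of them). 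Your identity $\overline{\sw}(\pi)=\sum_m mD_m$ and the relation $D_mD_{m+1}=\frac{p+1}{m+1}$ for the alternating policy also check out.

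However, the proof does not close, and the gap is exactly at the two places you flag as "the hard part" and "the main bookkeeping obstacle" --- which are not bookkeeping but the entire content of the theorem. First, the branch comparison: showing that the heavier-first branch of the $\Psi$-recursion dominates is a tight inequality (the two branches coincide at $\alpha=\beta$), and you only announce that it "becomes a concrete statement about $\gamma_k$" without deriving or verifying that statement; the paper's Appendix shows that the analogous estimates (Lemmas~\ref{lem:aux_2}--\ref{lem:inequality}) require carefully tracking alternating sums of $\overline\gamma_{k+j}$ and are delicate. Second, and more seriously, the off-band branch $\alpha q+\Psi_{q-1}(\alpha,\tfrac{q+1}{q}\beta)$ cannot be dispatched by a hypothesis confined to "mildly imbalanced" weights: each deviation multiplies one weight by $\tfrac{q+1}{q}$, and since $\prod_j\tfrac{j+1}{j}$ diverges, the weight ratios reachable by repeated deviations are unbounded. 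To bound $\Psi_{q-1}$ at off-band weights you therefore need to characterize the maximizer of $\alpha\overline{u_1}+\beta\overline{u_2}$ for \emph{all} ratios $\alpha/\beta$ (your conjectured "prefix then alternate" structure), which is a strictly harder theorem than the one you are proving and is asserted without proof. Until both pieces are supplied, the induction hypothesis is not strong enough to evaluate its own recursion, which is precisely the failure mode the paper's Theorem~\ref{thm:real_induction} was engineered to avoid.
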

Note that by linearity of expectation this implies optimality of the alternating policy for every linear scoring function $g(k)=\alpha k+\beta$ with $\alpha,\beta\in\reals$, $\alpha\leqslant 0$. In particular,
the result also holds for quasi-indifferent scoring
where $g(k)=N+(p-k+1)$ for large $N$.

In the following let $\pi^*_p$ always be the alternating policy of length $p$. We also recall that due to symmetry we can only consider policies that starts with $1$, e.g. policy $212$ is equivalent to $121$.
%
To prove Theorem~\ref{thm:optimality} we need to prove that for any policy $\pi$ of length $p$ the expected utilitarian social welfare is smaller or equal to the expected utilitarian social welfare of $\pi^*_p$. That is, $\dsw_\pi = \overline{\sw}(\pi)-\overline{\sw}(\pi^*_p) \leqslant 0$. We proceed in two steps. First, we describe $\dsw_\pi$ recursively, by representing the policy $\pi$ in terms of its deviations from $\pi^*_p$. Second, given the recursive description of $\dsw_\pi$, we prove by induction that this difference is never positive (Proposition~\ref{prop:reformulation}). The proof is not trivial as the natural inductive approach to derive $\dsw_\pi\leqslant 0$ from $\dsw_{\tilde\pi}\leqslant 0$ does not go through. Hence, we will prove a stronger result in Theorem~\ref{thm:real_induction} that implies Proposition~\ref{prop:reformulation} and Theorem~\ref{thm:optimality}.

\paragraph{Recursive definition.}
To obtain a recursive definition of $\dsw_\pi$, we observe that any policy $\pi$ can be written in terms of its deviations from $\Alt$ policy $\pi^*$. We explain this idea using the following example. Consider a policy $\pi = 1121$. There are two ways to extend $\pi$ with a prefix
to obtain policies of length 5:  $\pi' = 11121$ and $\pi'' = 21121$ which is equivalent to $\pi'' = 12212$. We say that $\pi'' = 12212$ follows $\Alt$ in extending $\pi$ as its prefix is $(12)$ which coincides with the alternation step. We say that $\pi' = 11121$ deviates from $\Alt$ in extending $\pi$ as its prefix is $(11)$ which does not correspond to the alternation step.

Next we define a notion of \emph{the policy tree}, which is  a balanced binary tree, that represents all possible policies in terms of deviations from $\pi^*$. The main purpose of this notion is to explain intuitions behind our derivations and  proofs. We start with the policy $(1)$, which is the root of the tree. We expand a policy to the left by a prefix of length one. We can \emph{follow} the strictly alternation policy by expanding (1) with  prefix $2$. This gives policy $(21)$ which is equivalent to $(12)$ due to symmetry. Alternatively, we can \emph{deviate} from $\Alt$ by expanding (1) with  prefix $1$. This gives policy $(11)$. This way we obtain all policies of length $2$. We can continue expanding the tree from $(12)$ and $(11)$ following the same procedure and keeping in mind that we break symmetries by remembering only polices that start with $1$. The following example show all polices of length at most $5$. By convention, given a policy $\pi$ in a node of the tree we say that we follow $\Alt$ on the left branch and deviate from $\Alt$ on the right branch.
\begin{example}
Figure~\ref{fig:policies_tree} shows a tree which represents all policies of length at most $5$. A number below each policy shows the value of the expected utilitarian social welfare for this policy. As can be seen from the tree, $\Alt$ is the optimum policy for all $p$. Consider, for
example, $\pi = 12212$. We can obtain this by deviations from $\pi^*_5$ (shown as the dashed path): $(1) \rightarrow_{L_1} (12) \rightarrow_{L_2} (121) \rightarrow_{R_3} (1121) \rightarrow_{L_4} (12212)$. 
\end{example}

Next we give a formal recursive definition of $\dsw_\pi$.  We recall that from Lemma~\ref{lem:recursion} the recursions for $\Alt$
\begin{align*}
  (\overline{u_1}(\pi^*_p), \overline{u_2}(\pi^*_p)) &=\left (p+\overline{u_2}(\pi^*_{p-1}),  \frac{p+1}{p}\overline{u_1}(\pi^*_{p-1})\right)
\end{align*}
For any $\pi\in\mathcal P_p$, $p\geqslant 2$ we obtain a similar recursion that depends on whether $\pi$ follows or deviates from $\pi^*$ in extension of $\tilde\pi$ at each step. In the first case, the prefix of $\pi$ is $(12)$ and in the second case the prefix is $(11)$. So we have
\begin{align*}
  (\overline{u_1}(\pi), \overline{u_2}(\pi)) &=  \begin{cases} \left(p+\overline{u_2}(\tilde\pi), \frac{p+1}{p}\overline{u_1}(\tilde\pi)\right)  \text{if $\pi = 12 \ldots$,} \\
  \left(p+\overline{u_1}(\tilde\pi),\frac{p+1}{p}\overline{u_2}(\tilde\pi) \right)  \text{if $\pi = 11 \ldots$.} \end{cases}
\end{align*}

Then $(\overline{u_1}(\pi)-\overline{u_1}(\pi^*_p), \overline{u_2}(\pi)-\overline{u_2}(\pi^*_p)) =$
\begin{align*}
     \begin{cases} \left(\overline{u_2}(\tilde\pi)-\overline{u_2}(\pi^*_{p-1}), \frac{p+1}{p}\left(\overline{u_1}(\tilde\pi)-\overline{u_1}(\pi^*_{p-1})\right)\right)  \text{if $\pi = 12 \ldots$,} \\
  \left(\overline{u_1}(\tilde\pi)-\overline{u_2}(\pi^*_{p-1}), \frac{p+1}{p}\left(\overline{u_2}(\tilde\pi)-\overline{u_1}(\pi^*_{p-1}) \right)  \right)  \text{if $\pi = 11 \ldots$.} \end{cases}
\end{align*}

We introduce notations to simplify the explanation. Using Proposition~\ref{prop:expectation_alternating} we define $\delta_p=\overline{u_1}(\pi^*_p)-\overline{u_2}(\pi^*_p)=\frac13\left[p+1+(-1)^{p+1}\gamma_{p+1}\right]$.
We define the sets
\[A_p=\left\{\left(\overline{u_1}(\pi)-\overline{u_1}(\pi^*_p),\overline{u_2}(\pi)-\overline{u_2}(\pi^*_p)\right)\ :\ \pi\in\mathcal P_p\right\}.\]
Note that for an element $(a,b)\in A_p$ corresponding to a policy $\pi\in\mathcal P_p$ we have $a+b = \overline{\sw}(\pi)-\overline{\sw}(\pi^*_p)=\dsw_\pi$. Hence, $\pi$ has a higher expected utilitarian social welfare than $\pi^*_p$ if and only if $a+b>0$.

The recursions above provide a description of the sets $A_p$. We have $A_1=\{(0,0)\}$ because $\pi^*_1$ is the only policy of length 1, and for $p\geqslant 2$ the set $A_p$ consists of the elements $\left(b,\frac{p+1}{p}a\right)$ and $\left(a+\delta_{p-1},\frac{p+1}{p}(b-\delta_{p-1})\right)$ where $(a,b)$ runs over $A_{p-1}$. Theorem \ref{thm:optimality} is equivalent to the following statement.
\begin{proposition}\label{prop:reformulation}
Let $A_1=\{(0,0)\}$ and
\[A_k=\left\{\left(b,\frac{k+1}{k}a\right) \right\}\cup\left\{\left(a+\delta_k,\frac{k+1}{k}(b-\delta_k)\right) \right\}\]
for $k\geqslant 2$ where $(a,b)\in A_{k-1}$, $\displaystyle\delta_k=\frac{1}{3}\left(k+(-1)^k\gamma_k\right)$. Then $a+b\leqslant 0$ for all $(a,b)\in\bigcup_k A_k$.
\end{proposition}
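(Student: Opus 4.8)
The plan is to prove a strengthening of Proposition~\ref{prop:reformulation} by induction on $k$, since, as the authors warn, the bare statement $a+b\leqslant 0$ does not carry itself through the recursion: applying the ``follow'' step $(a,b)\mapsto(b,\frac{k+1}{k}a)$ to a point with $a+b\leqslant 0$ can produce a point with positive coordinate sum whenever $a>0$. So I would look for an invariant of the form: every $(a,b)\in A_k$ satisfies $a+b\leqslant 0$ \emph{together with} a one-sided bound, something like $a\leqslant \delta_k$ (equivalently $\overline{u_1}(\pi)\leqslant\overline{u_1}(\pi^*_k)+\delta_k=2\overline{u_1}(\pi^*_k)-\overline{u_2}(\pi^*_k)$, a bound saying agent~1 cannot do too much better than in the alternating policy) and perhaps a lower bound on $b$ as well. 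The first concrete task is to run the two recursion maps on a candidate region $R_k=\{(a,b): a+b\leqslant 0,\ a\leqslant c_k,\ b\geqslant -d_k\}$ and solve for sequences $c_k,d_k$ (presumably expressible via $\delta_k$ and partial sums of $\delta_j$) making $R_k$ forward-invariant: both $(b,\frac{k+1}{k}a)$ and $(a+\delta_k,\frac{k+1}{k}(b-\delta_k))$ must lie in $R_{k+1}$ whenever $(a,b)\in R_k$. This is the step I expect to be the main obstacle: the ``follow'' map swaps the roles of the two coordinates while the ``deviate'' map shifts by $\delta_k$ and rescales, so the constraints needed to control $a$ after a follow-step are exactly the constraints on $b$ before it, and closing this loop forces a careful choice of the auxiliary bounds and a nontrivial inequality among the $\delta_j$'s and the factors $\frac{j+1}{j}$. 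Since $\delta_k=\frac13(k+(-1)^k\gamma_k)$ grows linearly while $\gamma_k=\Theta(\sqrt k)$ is a lower-order correction with alternating sign, I would separate the leading linear part from the $\gamma$ part and handle the parity-dependent sign of $(-1)^k\gamma_k$ explicitly (likely treating $k$ even and $k$ odd, or pairing consecutive steps).

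Concretely the induction would proceed as follows. First establish the base case: $A_1=\{(0,0)\}$, $A_2=\{(0,0),(\delta_1,-\delta_1)\}=\{(0,0),(\tfrac13\cdot 0,\ldots)\}$ — I would compute $A_2$ and $A_3$ explicitly and check they lie in the claimed region $R_2,R_3$, which also pins down the right normalization of $c_k,d_k$. Second, the inductive step: assume $A_{k-1}\subseteq R_{k-1}$; take an arbitrary $(a,b)\in A_{k-1}$; verify that the ``follow'' image $(b,\frac{k}{k-1}a)$ satisfies (i) $b+\frac{k}{k-1}a\leqslant 0$, using $a+b\leqslant 0$ together with the sign/size information on $a$ (here the bound $a\leqslant c_{k-1}$ combined with $b\leqslant -a$ does the work of controlling $\frac{k}{k-1}a$ against $b$), (ii) the new first coordinate $b\leqslant c_k$, which follows from $b\geqslant -d_{k-1}$ being turned around, and (iii) the new second-coordinate lower bound; then do the same three checks for the ``deviate'' image $(a+\delta_{k-1},\frac{k}{k-1}(b-\delta_{k-1}))$, where the coordinate sum becomes $a+\delta_{k-1}+\frac{k}{k-1}(b-\delta_{k-1})$ and I would rewrite it as $(a+b)+\frac{1}{k-1}b-\frac{1}{k-1}\delta_{k-1}$-type expression to see it is $\leqslant 0$ given $a+b\leqslant 0$ and $b\leqslant -a\leqslant -(\text{something})$. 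Finally, the conclusion $a+b\leqslant 0$ for all $(a,b)\in\bigcup_k A_k$ is immediate from $R_k\subseteq\{a+b\leqslant 0\}$, and Theorem~\ref{thm:optimality} follows since $a+b=\dsw_\pi$.

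One simplification worth attempting before committing to a two-sided region: because the ``follow'' map sends $(a,b)\mapsto(b,\frac{k+1}{k}a)$, iterating two follow-steps sends $(a,b)\mapsto(\frac{k+1}{k}a,\frac{(k+1)(k+2)}{k^2}\,b\cdot\frac{?}{?})$ — i.e. the two coordinates evolve almost independently along pure-follow stretches — so it may be cleaner to track the pair $(\overline{u_1}(\pi)-\overline{u_1}(\pi^*_p),\ \overline{u_2}(\pi)-\overline{u_2}(\pi^*_p))$ through maximal blocks of the policy between consecutive deviations, bounding the cumulative effect of each deviation by a telescoping sum of (rescaled) $\delta_j$'s. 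If that packaging works, the ``stronger result'' referenced as Theorem~\ref{thm:real_induction} is presumably exactly the statement ``$A_k\subseteq R_k$'' for the explicit region $R_k$, and the whole argument reduces to verifying finitely many inequalities between $\delta_j$, $\gamma_j$ and the rational factors $\frac{j+1}{j}$, which I would reduce to the single clean asymptotic/monotonicity fact $\gamma_{k+1}/\gamma_k = \sqrt{1+1/k}+O(k^{-3/2})\leqslant$ (the relevant ratio) already implicit in the stated asymptotics $\gamma_k=\sqrt{2k/\pi}+O(1/\sqrt k)$.
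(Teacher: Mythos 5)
Your overall strategy---strengthen the induction hypothesis so that it survives both recursion maps---is the right instinct, and your diagnosis of the mechanism is partially correct: the follow map $(a,b)\mapsto(b,\tfrac{k+1}{k}a)$ has coordinate sum $(a+b)+\tfrac{a}{k}$, so it needs control on $a$, while the deviate map $(a,b)\mapsto(a+\delta_k,\tfrac{k+1}{k}(b-\delta_k))$ has coordinate sum $(a+b)+\tfrac{b-\delta_k}{k}$, so it needs control on $b$. But your concrete candidate region $R_k=\{a+b\leqslant 0,\ a\leqslant c_k,\ b\geqslant -d_k\}$ cannot work, for two reasons. First, a sign error: the deviate map requires an \emph{upper} bound on $b$ (namely $b\leqslant\delta_k-k(a+b)$), whereas you impose a lower bound $b\geqslant -d_k$; combined with $a+b\leqslant 0$ this only yields another upper bound on $a$ and leaves the deviate image's sum uncontrolled exactly in the failure mode of Example~\ref{exm:weak_induction_hyph} (large positive $b$, sum barely negative). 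Second, and more fundamentally, the bounds needed are not absolute constants $c_k,d_k$ but bounds \emph{relative to the slack} $-(a+b)$: since $(0,0)\in A_k$ for every $k$ (the alternating policy itself), any forward-invariant region contains points with zero slack, and near such points you need $a\lesssim 0$ and $b\lesssim\delta_k$ with the tolerance degenerating as $a+b\to 0$. No axis-aligned box intersected with the half-plane $a+b\leqslant 0$ captures this; the paper's Figure~\ref{fig:wild_set} is precisely its evidence that a pointwise region/function strengthening of this simple shape is hard to find, and the paper abandons that route. You also leave the main technical content open: you never determine $c_k,d_k$ or verify closure, and you flag this yourself as ``the main obstacle.''

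The paper instead strengthens in a different direction: it requires $x'+y'\leqslant 0$ not only at $(x,y)\in A_k$ but along entire future orbits $G_{km}(x,y)$ (all subsequent deviations) and $F_{km}(x,y)$ (one follow, then all deviations), for every $m$. Equivalently, the invariant is an intersection of infinitely many $m$-indexed half-planes of the form $a+b+\tfrac{m}{k+1}b\leqslant(\text{explicit function of }k,m)$ and $a+b+\tfrac{m}{k+1}a\leqslant(\cdots)$, computed explicitly in Lemmas~\ref{lem:aux_1}--\ref{lem:aux_3}; closure under the recursion is then verified case by case (parity of $k$, follow vs.\ deviate) using the crude estimates on $\overline\gamma_k$ in Lemma~\ref{lem:inequality}. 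Your closing remark about tracking blocks between deviations points in roughly this direction (though the paper's blocks are maximal runs of \emph{deviations}, not of follow steps), but as written your proposal does not supply the invariant that makes the induction close, so the proof has a genuine gap. (Minor: $A_2=\{(0,0),(\delta_2,-\tfrac32\delta_2)\}$; your base-case computation uses $\delta_1$ and the wrong scaling.)
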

Figure~\ref{fig:A_k_in_policies_tree} shows the sets $A_k$, $k=1,\ldots,4$ in the policy tree.
\begin{figure}[!htbp]  \centering
    \includegraphics[width=0.48\textwidth]{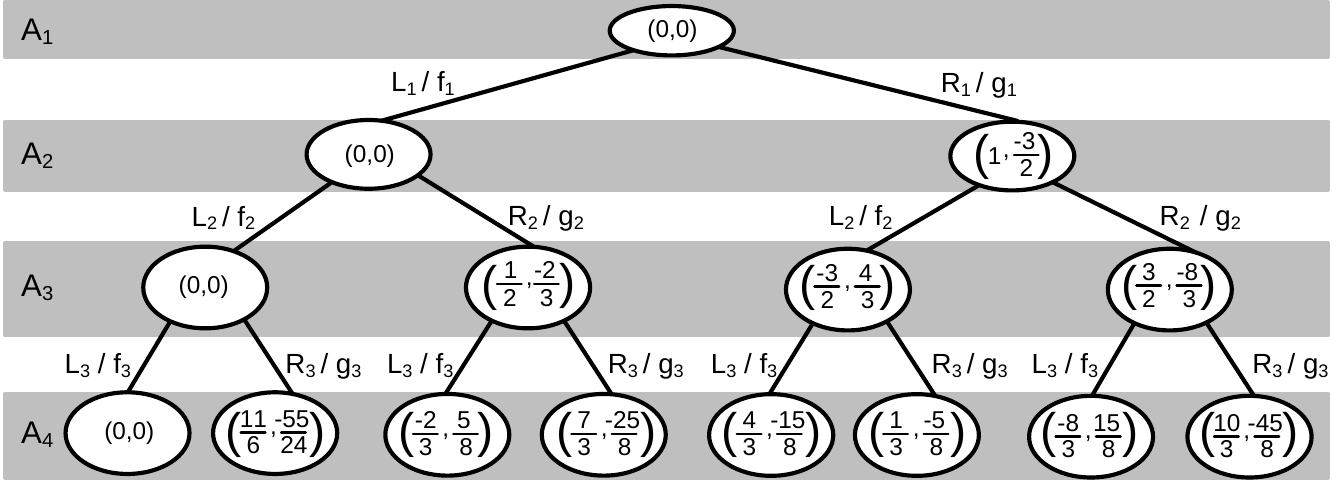}\\
    \caption{The sets $A_k$, $k=1,\ldots,4$ in the policy tree. \label{fig:A_k_in_policies_tree}}
\end{figure}

\paragraph{Proving optimality.}
We might try to prove Proposition~\ref{prop:reformulation} inductively by deriving $a+b\leqslant 0$ for the point $(a,b)\in A_p$ corresponding to policy $\pi$ from $a'+b'\leqslant 0$ for $(a',b')\in A_{p-1}$ corresponding to policy $\tilde\pi$. Unfortunately, the induction hypothesis is too weak as the following example shows.
\begin{example}\label{exm:weak_induction_hyph}
Assume $(a',b')=(-12, 11.9) \in A_{10}$ corresponding to some policy $\pi\in\mathcal P_{10}$. Let $(a,b)\in A_{11}$ be obtained from $(-12, 11.9)$ by deviating from $\pi^*_{11}$. With $\delta_{11} = 2.7643$ we obtain $a+b= -9.2357+9.9662 > 0$. Thus $(a',b')$ satisfies Proposition \ref{prop:reformulation} while $(a,b)$ violates it. 
\end{example}

To remedy this problem we would like to strengthen the proposition, for example by proving $a+b\leqslant f(a,b)$ for all $(a,b)\in\bigcup_k A_k$ where $f$ is some function with $f(x,y)\leqslant 0$ for all $(x,y)$. The difficulty of finding such a function is indicated by Figure~\ref{fig:wild_set} showing the set $A_{10}$. Different markers distinguish the points arising from $A_9$ by following $\pi^*$ from those deviating from $\pi^*$.
\begin{figure}[htb]
  \centering
  \begin{minipage}{.49\linewidth}
    \includegraphics[width=\textwidth]{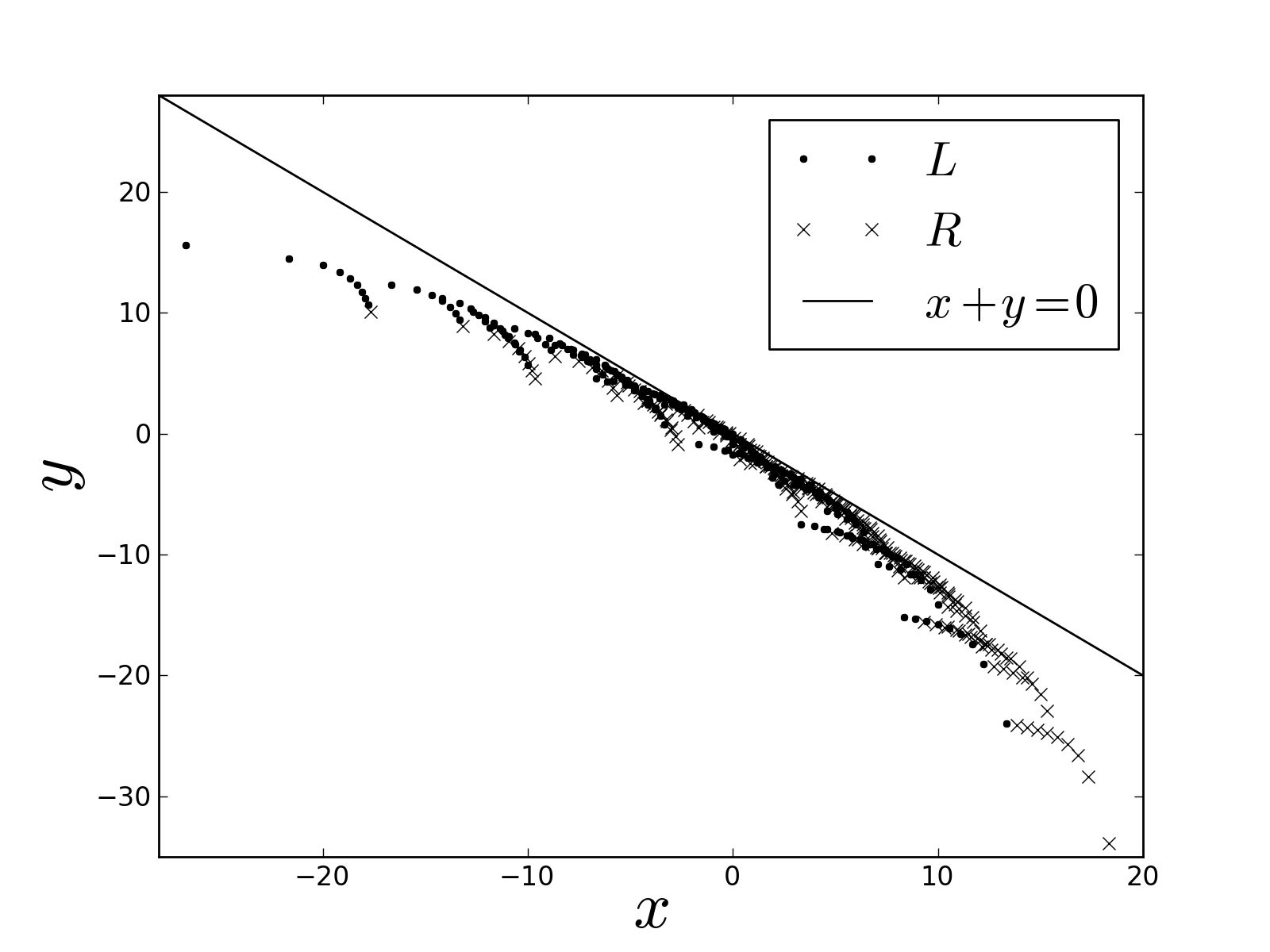}
  \end{minipage}\hfill
  \begin{minipage}{.49\linewidth}
    \includegraphics[width=\textwidth]{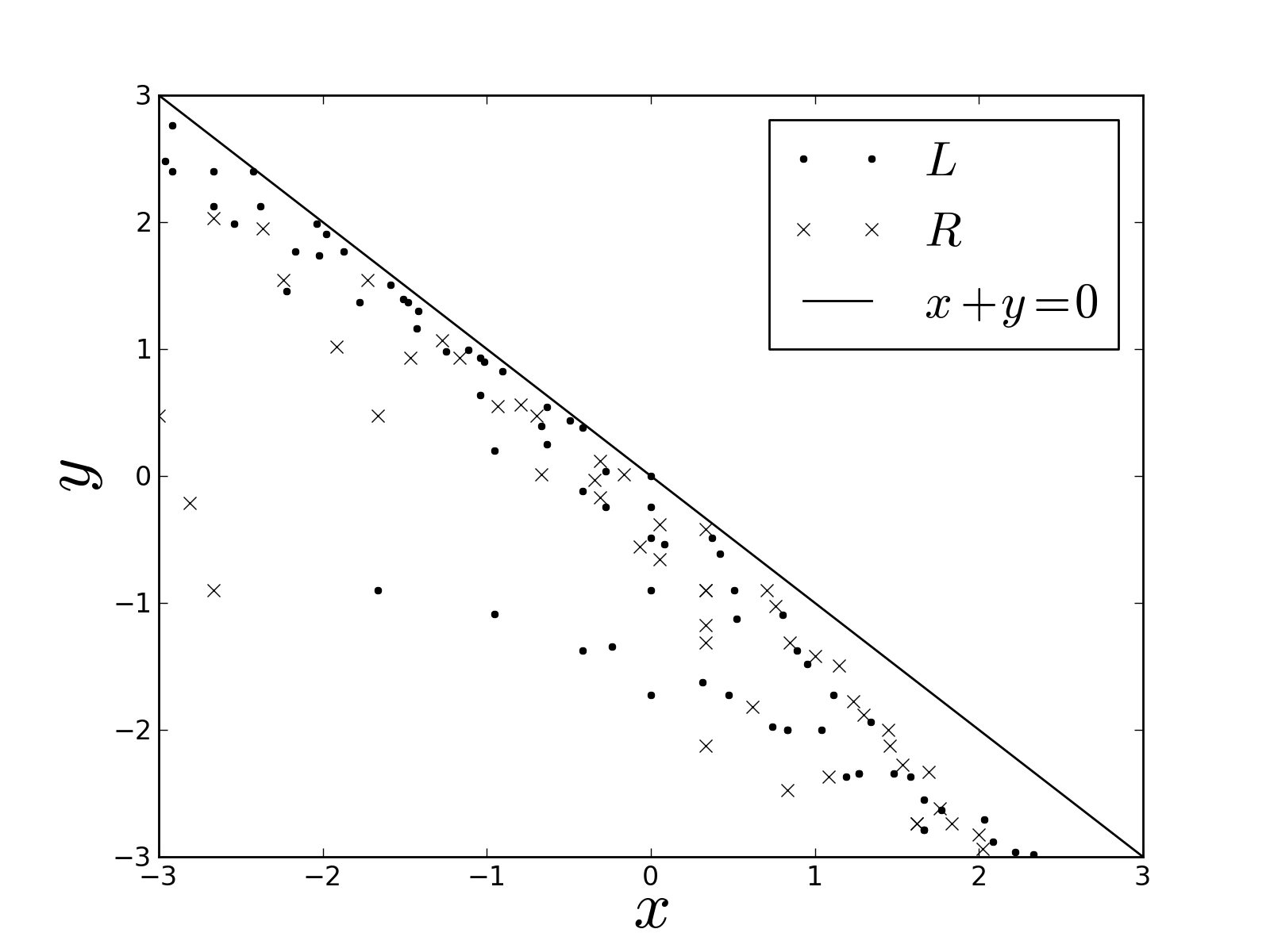}
  \end{minipage}
  \caption{The set $A_{10}$ and a more detailed view of the region around the origin.}
  \label{fig:wild_set}
\end{figure}



The key idea of our proof is to strengthen Proposition \ref{prop:reformulation} in another direction. We describe this strengthening first and then outline the induction argument. The technical details of the proof are presented in the online Appendix \ref{sec:proof_main_thorem}.
Consider a policy $\pi$ that is represented by a node $n_{\pi}$ at level $k$ in the policy tree. Instead of requiring the inequality $a+b\leqslant 0$ only for the point $(a,b)\in A_k$ that corresponds to policy $\pi$, we also require it for (i) all policies that lay on the path that follow only the right branches from $n_{\pi}$ and (ii) all polices that lay on the path that starts from $n_{\pi}$ by following the left branch once and then only follow the right branches. To formalize this idea, for $k\geqslant 1$ we define functions $f_k,g_k:\reals^2\to\reals^2$ by $f_k(x,y) = \left(y,\frac{k+2}{k+1}x\right)$ and  $g_k(x,y) = \left(x+\delta_{k+1},\frac{k+2}{k+1}(y-\delta_{k+1})\right)$. Note that $A_{k+1}=f_k(A_{k})\cup g_k(A_{k})$ for all $k\geqslant 1$, as $f_k$ encodes the case when we follow the left branch and $g_k$ -- the right branch.  Figure~\ref{fig:A_k_in_policies_tree} illustrates this correspondence.
We also consider iterated compositions of these functions. For every $k\geqslant 1$ let $G_{k0}$ denote the identity on $\reals^2$, i.e. $G_{k0}(x,y)=(x,y)$, and for $m\geqslant 1$ let $G_{km}$ denote the function
\[G_{km}=g_{k+m-1}\circ g_{k+m-2}\circ\cdots\circ g_{k}.\]

Applying $G_{km}$ to the point $(a,b)\in A_k$ corresponding to $\pi\in\mathcal P_k$ gives the point $(a',b')\in A_{k+m}$ which corresponds to the policy $\pi'\in\mathcal P_{k+m}$ that is obtained from $\pi$ by following the right branch $m$ times.
For all $k\geqslant 1$ and $m\geqslant 1$, we define the function $F_{km}=G_{k+1,m-1}\circ f_k$. $F_{km}$ corresponds to starting in level $k$, following the first left branch and then $m-1$ right branches. For $(x,y)\in A_k$, $G_{km}(x,y)\in A_{k+m}$ for $m\geqslant 0$ and $F_{km}(x,y)\in A_{k+m}$ for $m\geqslant 1$. Proposition \ref{prop:reformulation} is a consequence of the following theorem.
\begin{theorem}\label{thm:real_induction}
Let $A_1=\{(0,0)\}$ and
\begin{align*}
A_{k+1}=f_k(A_k)\cup g_k(A_k)= & \left\{\left(y,\frac{k+2}{k+1}x\right)\ \right\}\cup \\ \left\{\left(x+\delta_{k+1},\frac{k+2}{k+1}(y-\delta_{k+1})\right)\right\}
\end{align*}
for $k\geqslant 1$ where  $(x,y)\in A_{k}$,  $\displaystyle\delta_k=\frac13\left(k+(-1)^k\gamma_k\right)$. Then for every $k\geqslant 1$ and every $(x,y)\in A_k$ the following statements are true.
\begin{enumerate}
\item For all $m\geqslant 0$, if $(x',y')=G_{km}(x,y)$ then $x'+y'\leqslant 0$.
\item For all $m\geqslant 1$, if $(x',y')=F_{km}(x,y)$ then $x'+y'\leqslant 0$.
\end{enumerate}
\end{theorem}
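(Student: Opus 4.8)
The plan is to prove statements~1 and~2 simultaneously by induction on $k$. The first step is to record the two composition identities that drive the induction: since $G_{k+1,m}=g_{k+m}\circ\cdots\circ g_{k+1}$ and $F_{k+1,m}=G_{k+2,m-1}\circ f_{k+1}$, appending the last map used to reach level $k+1$ gives $G_{k+1,m}\circ g_k=G_{k,m+1}$ and $G_{k+1,m}\circ f_k=F_{k,m+1}$. A point of $A_{k+1}$ is $g_k(x,y)$ or $f_k(x,y)$ for some $(x,y)\in A_k$, so its right-spine sums are either right-spine sums of $(x,y)$ (handled by statement~1 at level $k$) or left-then-right-spine sums of $(x,y)$ (handled by statement~2 at level $k$). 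Hence statement~1 at level $k+1$ is immediate from the inductive hypothesis.

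Next I would make everything explicit. Linearising the second coordinate of $g_j$ and $f_j$ (substitute $v\mapsto(\text{index}+1)\,w$, which turns the recursion for $w$ into a telescoping sum), one obtains, for $(x',y')=G_{km}(x,y)$, the closed form $x'+y'=x+\frac{k+m+1}{k+1}\,y-\sum_{l=k+1}^{k+m}\delta_l\,\frac{k+m+1-l}{l}$, and for $(x',y')=F_{km}(x,y)$ the same expression with $x$ and $y$ interchanged and the sum starting at $l=k+2$. Since $\delta_l>0$ for $l\geqslant 2$ and $k+m+1-l>0$ on the summation range, the subtracted sums are nonnegative. This settles the base case $k=1$ (where $x=y=0$), and, more importantly, it shows that the only configurations that can fail are those in which the ``leading'' coordinate ($y$ for statement~1, $x$ for statement~2) is strictly positive; and since $x+y\leqslant 0$ on $A_k$ by statement~1 with $m=0$, in that case the other coordinate is even more negative.

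The real work is statement~2 at level $k+1$. Writing a point of $A_{k+1}$ as $f_k(x,y)$ or $g_k(x,y)$ and expanding $F_{k+1,m}$ with the explicit formula, I would compare the result with the left-then-right-spine bound of $(x,y)$ at level $k$ and with its right-spine bounds at a suitably chosen number of steps; these comparisons leave an error term that is $m$ times an elementary linear form in $x$, $y$ and the relevant $\delta_l$'s. When the leading coordinate is nonpositive the target inequality drops out of the explicit formula directly; when it is positive, one feeds in the constraint $x+y\leqslant 0$ together with the sharper parametrised hypotheses (statement~2 at $m=1$, i.e.\ $\frac{k+2}{k+1}x+y\leqslant 0$, and statement~1 at an appropriate value of $m$), plus monotonicity of $l\mapsto\delta_l$ and the asymptotics $\gamma_l=\sqrt{2l/\pi}+O(1/\sqrt l)$ to bound the $\delta$-sums. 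Finally, Proposition~\ref{prop:reformulation}, and hence Theorem~\ref{thm:optimality}, is the case $m=0$ of statement~1.

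The main obstacle is exactly this last step. The natural reduction does not close: $f_{k+1}\circ f_k$ and $f_{k+1}\circ g_k$ are not of the form $f_{k+1}\circ(\text{a map into }A_k)$, so statement~2 at level $k+1$ cannot be pushed down to the hypothesis at level $k$ by composition alone, and the set $A_k$ is too irregular (Figure~\ref{fig:wild_set}) to admit a clean dominating function. The point of strengthening the claim to all right-spines and all left-then-right-spines is precisely to carry enough ``lookahead'' to absorb the positive-coordinate cases; making the resulting $\delta$-sum estimates uniform in $k$ and $m$ is where the care is needed.
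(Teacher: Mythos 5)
Your plan reproduces the paper's architecture: strengthen the claim to all right‑spines $G_{km}$ and left‑then‑right‑spines $F_{km}$; use the composition identities $G_{k+1,m}\circ g_k=G_{k,m+1}$ and $G_{k+1,m}\circ f_k=F_{k,m+1}$ so that statement~1 at level $k+1$ is immediate from both statements at level $k$; derive the closed form $x'+y'=x+\frac{k+m+1}{k+1}y-\sum_{l=k+1}^{k+m}\delta_l\,\frac{k+m+1-l}{l}$ (correct, and equivalent to Lemma~\ref{lem:aux_1}); dispose of the base case and of the branch where the leading coordinate is nonpositive; and, for statement~2, substitute the inductive bound coming from a spine of $(\tilde x,\tilde y)$ one level down. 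All of that is sound and is exactly what the paper does.

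The gap is in the one step that carries the real content, and as sketched it would fail. Dichotomizing on the \emph{sign} of the leading coordinate does not close the induction: taking $(x',y')=F_{km}(f_{k-1}(\tilde x,\tilde y))$ and substituting $x''+y''\leqslant 0$ for $(x'',y'')=G_{k-1,m-1}(\tilde x,\tilde y)$, the surviving terms are $\frac{\tilde x+\tilde y}{k}-\frac{m}{k(k+1)}\tilde y+\frac13\bigl(\overline\gamma_{k+1}-\overline\gamma_{k+2\lfloor(m-1)/2\rfloor+1}\bigr)$, and the $\overline\gamma$‑difference is nonnegative; knowing only $\tilde y>0$ gives you nothing to absorb it. One must split instead at the quantitative threshold $\tilde y\leqslant\frac{(m-1)^2(k+1)}{6m}$ (the exact point where the ``immediate'' branch stops working), so that the complementary branch yields $x'+y'<\frac13\bigl(\overline\gamma_{k+1}-\overline\gamma_{k+2\lfloor(m-1)/2\rfloor+1}\bigr)-\frac{(m-1)^2}{6k}$, which is then killed by a telescoping estimate on $\overline\gamma$ (Lemma~\ref{lem:inequality}). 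Moreover, the tools you propose for the $\delta$‑sums are not available: $l\mapsto\delta_l$ is \emph{not} monotone ($\delta_2=1$, $\delta_3=\tfrac12$, $\delta_4=\tfrac{11}{6}$, and the oscillation persists since $\gamma_l\to\infty$), and the asymptotic $\gamma_l=\sqrt{2l/\pi}+O(1/\sqrt l)$ cannot yield an inequality valid for every $k$ and $m$; what is needed is the exact recursion (\ref{eq:overlinegamma_recursion}) for $\overline\gamma$ together with the crude bound $\overline\gamma_l\leqslant\tfrac12$. Finally, the case $(x,y)=g_{k-1}(\tilde x,\tilde y)$ requires comparing with $F_{k-1,m-1}(\tilde x,\tilde y)$ rather than a $G$‑spine and a separate treatment of $m=1$; your sketch leaves the choice of comparison spine unspecified, and this choice is where the cancellation of the $\frac{(m-1)m}{6}$ terms actually happens.
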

\begin{proof}[Proof sketch]
\begin{figure}[htb]  \centering
    \includegraphics[width=0.4\textwidth]{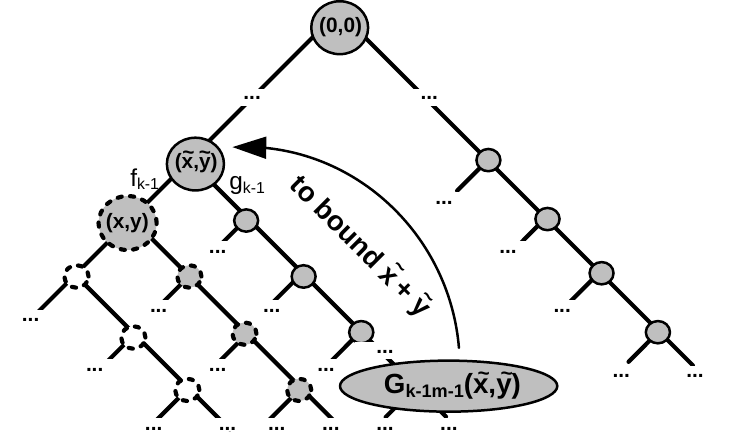}\\
    \caption{Schematic representation of the proof of Theorem~\ref{thm:real_induction}. \label{fig:tree_scheme_proof}}
\end{figure}
We provide the full proof in the Appendix~\ref{subsec:actual_proof}.
We give a high-level overview. We start with a few technical lemmas to derive an explicit description of functions $(x',y') = F_{km} (x,y)$ and $(x'',y'') = G_{km} (x,y)$. This gives us explicit expressions for the sums $x'+y'$ and $x''+y''$ in terms of $x$ and $y$. Then we proceed through the induction proof. We summarize the induction step here. Suppose the statements of the theorem are already proved for all sets $A_l$ with $l<k$. Let $(x,y)$ be an arbitrary element of $A_k$. Suppose $(x,y)=f_{k-1}(\tilde x,\tilde y)$ for some $(\tilde x,\tilde y)\in A_{k-1}$ (the case $(x,y)=g_{k-1}(\tilde x,\tilde y)$ is similar). Figure~\ref{fig:tree_scheme_proof} shows $(\tilde x, \tilde y)$ and $(x,y)$ that is obtained from $(\tilde x, \tilde y)$  by following the left branch. By the induction hypothesis, $x'+y'\leqslant 0$ whenever $(x',y')\in\{G_{k-1,m}(\tilde x, \tilde y),F_{k-1,m}(\tilde x ,\tilde y)\}$. The corresponding nodes are highlighted in gray in Figure~\ref{fig:tree_scheme_proof}. To complete the induction step we need to show $x'+y'\leqslant 0$ for $(x',y')\in\{G_{km} (x,y),F_{km} (x,y)\}$. The corresponding nodes are indicated by dashed circles. The result for $(x',y')=G_{km}(x,y)$ (gray and dashed) follows immediately as $(x',y')=G_{km}(x,y)=F_{k-1,m+1}(\tilde x, \tilde y)$.
For $(x',y')=F_{km}(x,y)$ we first express $x'+y'$ in terms of $\tilde x$ and $\tilde y$. Then, by induction $x''+y''\leqslant 0$ for $(x'',y'')=G_{k-1,m-1}(\tilde x,\tilde y)$. Inverting the representation of $x''+y''$ in terms of $\tilde x$ and $\tilde y$ we derive a bound $\tilde x + \tilde y \leq -c(m)\leqslant 0$, depending on $m$, and this stronger bound is used to prove $x'+y'\leqslant 0$ for $(x',y')=F_{km}(x,y)$.
\end{proof}

The extension of Theorem~\ref{thm:real_induction} to $n$ agents is not straightforward. Firstly, it requires deriving exact recursions for the expected utility for an arbitrary $p$. This is not trivial, as Proposition~\ref{prop:expectation_alternating} only provides asymptotics. 
An easier extension might be to other utility
functions. The alternating policy is not optimal for
all scoring functions. For example, it is not optimal for
the $k$-approval scoring function
which has $g(i)=1$ for $i\leqslant k$ and 0 otherwise.
However, we conjecture that $\Alt$ is optimal for all
convex scoring functions (which includes
lexicographical scoring). 


\section{Strategic Behaviour}\label{sec:strategic}

So far, we have supposed agents sincerely pick
the most valuable item left. However, agents can
sometimes improve their utility by picking
less valuable items. To understand such strategic behaviour, we 
view this as a finite repeated game with perfect information.
\cite{kohler1971class} proves that we can compute the
subgame perfect Nash equilibrium for the alternating
policy with two agents by simply reversing
the policy and the preferences and playing the
game backwards. More recently,
\cite{comsoc2012} prove this holds for any
policy with two agents. 
%

We will exploit such reversal symmetry.
We say that a policy $\pi$ is {\em reversal symmetric} if
and only
the reversal of $\pi$, after interchanging the agents if necessary,
equals $\pi$. The policies $1212$ and $1221$ are reversal symmetric,
but 
$1121$ is not.
The next result follows quickly by expanding
and rearranging expressions for the expected
utilitarian social welfare using the fact that we
can compute strategic play by simply reversing the
policy and profile and supposing truthful
behaviour.

\begin{theorem}\label{thm:strategic_behaviour}
For two agents and any utility function,
any reversal symmetric policy that
maximizes the expected utilitarian social
welfare for truthful behaviour also maximizes
the expected utilitarian social welfare for strategic behaviour.
\end{theorem}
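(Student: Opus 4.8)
The plan is to reduce the strategic claim to the truthful one already proved, exploiting the reversal symmetry of the picking game. Write $\overline{\sw}(\pi)$ for the expected utilitarian social welfare under truthful play and $\overline{\sw^{s}}(\pi)$ for the expected utilitarian social welfare when the two agents play the subgame perfect equilibrium. For a policy $\pi$ let $\pi^{\mathrm{rev}}$ be its reversal, with the two agents swapped if necessary so that it again starts with $1$, and for a profile $R$ let $R^{\mathrm{rev}}$ be the profile obtained by reversing every agent's preference order. First I would record two trivialities: the map $R\mapsto R^{\mathrm{rev}}$ is an involution on the finite set of all profiles, hence a measure-preserving bijection under the uniform distribution; and swapping the two agents does not change the utilitarian social welfare of an allocation, so the quantities $\overline{\sw}(\pi^{\mathrm{rev}})$ and $\overline{\sw^{s}}(\pi^{\mathrm{rev}})$ are well defined independently of that swap.

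The core of the proof is the identity $\overline{\sw^{s}}(\pi)=\overline{\sw}(\pi^{\mathrm{rev}})$ for every policy $\pi$, and this is precisely the ``expanding and rearranging'' step. By the result of \cite{kohler1971class,comsoc2012}, for two agents the subgame perfect outcome of $(\pi,R)$ is obtained by playing the reversed policy $\pi^{\mathrm{rev}}$ truthfully on the reversed profile $R^{\mathrm{rev}}$; so in the sum $\overline{\sw^{s}}(\pi)=\tfrac{1}{(p!)^{2}}\sum_{R}\sw^{s}_{R}(\pi)$ each term $\sw^{s}_{R}(\pi)$ can be replaced by the corresponding truthful social welfare determined by $(\pi^{\mathrm{rev}},R^{\mathrm{rev}})$. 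Reindexing the sum through the bijection $R\mapsto R^{\mathrm{rev}}$ (and using that the agent swap in $\pi^{\mathrm{rev}}$ only permutes the two summands inside each term) collapses the right-hand side to $\tfrac{1}{(p!)^{2}}\sum_{R}\sw_{R}(\pi^{\mathrm{rev}})=\overline{\sw}(\pi^{\mathrm{rev}})$. I expect this to be the only real obstacle: one has to state the cited reversal result in exactly the form needed and make sure that, for an \emph{arbitrary} scoring function $g$ (not just Borda), the rank inversion that $R\mapsto R^{\mathrm{rev}}$ induces on each agent's preference order is absorbed correctly when passing between the strategic game on $\pi$ and the truthful game on $\pi^{\mathrm{rev}}$.

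Granting the identity, the theorem is immediate. Since $\pi\mapsto\pi^{\mathrm{rev}}$ is a bijection of $\mathcal P_{p}$ onto itself, $\max_{\pi}\overline{\sw^{s}}(\pi)=\max_{\pi}\overline{\sw}(\pi^{\mathrm{rev}})=\max_{\sigma}\overline{\sw}(\sigma)$. Let $\pi^{\dagger}$ be any reversal symmetric policy achieving $\overline{\sw}(\pi^{\dagger})=\max_{\sigma}\overline{\sw}(\sigma)$. Reversal symmetry means $(\pi^{\dagger})^{\mathrm{rev}}=\pi^{\dagger}$ (up to the agent swap, which is immaterial), hence $\overline{\sw^{s}}(\pi^{\dagger})=\overline{\sw}\big((\pi^{\dagger})^{\mathrm{rev}}\big)=\overline{\sw}(\pi^{\dagger})=\max_{\sigma}\overline{\sw}(\sigma)=\max_{\pi}\overline{\sw^{s}}(\pi)$, so $\pi^{\dagger}$ maximizes the expected utilitarian social welfare under strategic behaviour as well. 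Everything outside the identity is bookkeeping; the whole weight of the argument sits on turning ``strategic play equals reversed truthful play'' into that clean expectation-level equality.
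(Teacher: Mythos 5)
Your proposal matches the paper's argument: the paper offers only a one-sentence sketch (``expand and rearrange the expressions for expected utilitarian social welfare using the fact that strategic play equals truthful play on the reversed policy and profile''), and your identity $\overline{\sw^{s}}(\pi)=\overline{\sw}(\pi^{\mathrm{rev}})$ obtained from the cited reversal result together with the measure-preserving involution $R\mapsto R^{\mathrm{rev}}$ is precisely that sketch made explicit. The concluding bookkeeping --- reversal is a bijection on policies, so the strategic and truthful maxima coincide, and a reversal symmetric truthful maximizer attains both --- is correct and is exactly what the paper intends.
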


As the alternating policy is reversal
symmetric, it follows that the alternating policy is also optimal
for strategic behaviour.
Unfortunately, the generalisation of these results
to more than two agents is complex.
Indeed, for an unbounded number of agents,
computing the subgame perfect Nash equilibrium becomes
PSPACE-hard \cite{knwxaaai13}.

\section{Conclusions}

Supposing additive utilities, and full independence
between agents,
we have shown that we can compute the expected utility
of a sequential allocation procedure in polynomial time
for any utility function. Using this result,
we have proven that the expected utilitarian social welfare for Borda
utilities
is maximized by the alternating policy in which two agents
pick items in a fixed order.
We have argued that this mechanism remains optimal when
agents behave strategically.
There remain open several important questions.
For example, is the alternating policy optimal for more
than two agents? What happens with non-additive
utilities?
\eject
\bibliographystyle{named}
\bibliography{social_choice}

\begin{thebibliography}{}

\bibitem[\protect\citeauthoryear{Bouveret and Lang}{2011}]{bouveretgeneral}
S.~Bouveret and J.~Lang.
\newblock A general elicitation-free protocol for allocating indivisible goods.
\newblock In T.~Walsh, editor, {\em Proceedings of the 22nd International Joint
  Conference on Artificial Intelligence (IJCAI 2011)}, pages 73--78, 2011.

\bibitem[\protect\citeauthoryear{Brams and Fishburn}{2000}]{bfscw2000}
S.J. Brams and P.C. Fishburn.
\newblock Fair division of indivisible items between two people with identical
  preferences: Envy-freeness, pareto-optimality, and equity.
\newblock {\em Social Choice and Welfare}, 17(2):247--267, 2000.

\bibitem[\protect\citeauthoryear{Brams and Kaplan}{2004}]{bkjtp2004}
S.J. Brams and T.R. Kaplan.
\newblock Dividing the indivisible.
\newblock {\em Journal of Theoretical Politics}, 16(2):143--173, 2004.

\bibitem[\protect\citeauthoryear{Brams \bgroup \em et al.\egroup
  }{2003}]{peftd2003}
S.J. Brams, P.H. Edelman, and P.C. Fishburn.
\newblock Fair division of indivisible items.
\newblock {\em Theory and Decision}, 55(2):147--180, 2003.

\bibitem[\protect\citeauthoryear{Brams \bgroup \em et al.\egroup
  }{2012}]{undercut}
S.~Brams, D.~Kilgour, and C.~Klamler.
\newblock The undercut procedure: an algorithm for the envy-free division of \
  indivisible items.
\newblock {\em Social Choice and Welfare}, 39(2-3):615--631, 2012.

\bibitem[\protect\citeauthoryear{Herreiner and Puppe}{2002}]{hpscw2002}
D.K. Herreiner and C.~Puppe.
\newblock A simple procedure for finding equitable allocations of indivisible
  go\ ods.
\newblock {\em Social Choice and Welfare}, 19(2):415--430, 2002.

\bibitem[\protect\citeauthoryear{Kalinowski \bgroup \em et al.\egroup
  }{2012}]{comsoc2012}
T.~Kalinowski, N.~Narodytska, T.~Walsh, and L.~Xia.
\newblock Elicitation-free protocols for allocating indivisible goods.
\newblock Fourth International Workshop on Computational Social Choice, 2012.

\bibitem[\protect\citeauthoryear{Kalinowski \bgroup \em et al.\egroup
  }{2013a}]{tronline}
T.~Kalinowski, N.~Narodytska, and T.~Walsh.
\newblock A social welfare optimal sequential allocation procedure.
\newblock Technical report, CoRR archive, available at
  \url{http://arxiv.org/abs/1304.5892}, 2013.

\bibitem[\protect\citeauthoryear{Kalinowski \bgroup \em et al.\egroup
  }{2013b}]{knwxaaai13}
T.~Kalinowski, N.~Narodytska, T.~Walsh, and L.~Xia.
\newblock Strategic behavior when allocating indivisible goods sequentially.
\newblock In {\em Twenty-Seventh AAAI Conference on Artificial Intelligence
  (AAAI-13)}, 2013.

\bibitem[\protect\citeauthoryear{Kohler and
  Chandrasekaran}{1971}]{kohler1971class}
D.A. Kohler and R.~Chandrasekaran.
\newblock A class of sequential games.
\newblock {\em Operations Research}, 19(2):270--277, 1971.

\end{thebibliography}
~\\~\\~\\~\\~\\~\\~\\~\\~\\~\\~\\~\\~\\~\\~\\~\\~\\~\\~\\~\\~\\~\\~\\~\\~\\~\\~\\~\\~\\~\\~\\~\\~\\~\\~\\~\\~\\~\\~\\~\\~\\~\\~\\~\\~\\~\\~\\~\\~\\~\\~\\
\eject

\section*{Acknowledgments}

The authors are supported by the Australian Governments Department of Broadband, Communications and the Digital Economy, the Australian Research Council and the Asian Office of Aerospace Research and Development through grant AOARD-124056.

\section*{Appendix}

\begin{appendix}


\section{Proof of Proposition \ref{prop:expectation_alternating} for $n$ agents}\label{sec:proof_lemma_recursion_borda_gen}
We determine the asymptotic behaviour of the expected utilities for the strictly alternating policy of length $p$
\[\pi=123\ldots n123\ldots n123\ldots n\ldots.\]


As the policy is now determined by the number of items we simplify notation by letting $u_{ip}$ be the expected utility of agent $i$ for the allocation of $p$ items. Then $u_{11}=1$, $u_{21}=u_{31}=\cdots=u_{n1}=0$ and
\begin{align*}
u_{1p} &= p+u_{n,p-1},& u_{ip}&=\frac{p+1}{p}u_{i-1,p-1}\quad(i=2,\ldots,n)
\end{align*}
for $p\geqslant 2$. Decoupling these recursions we get for the first agent
\begin{align*}
u_{1p} &= p,\ p\leqslant n, & u_{1p}&=p+\frac{p}{p-n+1}u_{1,p-n}, \ p>n.
\end{align*}
and this allows us to write down the expected utility exactly for one residue class modulo $n$ per agent.
\begin{proposition}\label{prop:exact_values}
For $i\in\{1,2,\ldots,n\}$, if $p\equiv i-1\pmod n$ then the expected utility of agent $i$ equals
\[u_{ip}=\frac{(p-i+1)(p+1)}{n+1}\]
\end{proposition}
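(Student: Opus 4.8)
The plan is to prove Proposition~\ref{prop:exact_values} by induction on the residue-class index, i.e. by induction along the arithmetic progression $p = i-1, (i-1)+n, (i-1)+2n, \ldots$ for each fixed $i\in\{1,\ldots,n\}$. The base case is $p = i-1$. For $i=1$ this means $p=0$ (the empty allocation), where $u_{1,0}=0 = \frac{(0-1+1)(0+1)}{n+1}$, consistent with the convention that empty sums are zero; for $i\geqslant 2$ the base case $p=i-1\leqslant n-1$ can be read off from the decoupled recursion, since for $p\leqslant n$ every agent other than agent~$1$ who is still "active" gets exactly her first pick at step~$i\leqslant p$, and one checks directly that $u_{i,i-1}=0$, which matches $\frac{(p-i+1)(p+1)}{n+1}$ at $p=i-1$. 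So the real content is the inductive step.

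For the inductive step I would iterate the within-round recursion $u_{ip}=\frac{p+1}{p}u_{i-1,p-1}$ exactly $i-1$ times to peel agent~$i$ back to agent~$1$, picking up the telescoping product of the factors $\frac{p+1}{p}\cdot\frac{p}{p-1}\cdots\frac{p-i+3}{p-i+2} = \frac{p+1}{p-i+2}$, giving
\[
u_{ip} = \frac{p+1}{p-i+2}\, u_{1,\,p-i+1}.
\]
Now $p\equiv i-1\pmod n$ forces $p-i+1\equiv 0\pmod n$, so I can apply the decoupled recursion for agent~$1$ in the form $u_{1,q}=q+\frac{q}{q-n+1}u_{1,q-n}$ with $q=p-i+1$ (a positive multiple of $n$, handling $q=0$ separately), and then invoke the induction hypothesis for agent~$1$ at $q-n = (p-n)-i+1 \equiv i-1 \pmod n$, namely $u_{1,q-n}=\frac{(q-n)(q-n-i+2)}{n+1}$ — wait, more cleanly: the induction hypothesis applied to index~$1$ at the smaller item count $p-n$ gives $u_{1,p-n} = \frac{(p-n)(p-n+1)}{n+1}\cdot$ (for residue $0$, i.e. $i=1$) and one then runs the same peeling identity. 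It is cleanest to first establish the $i=1$ case in closed form for all $p\equiv 0\pmod n$ by a one-variable induction on $p/n$ using $u_{1p}=p+\frac{p}{p-n+1}u_{1,p-n}$, verify algebraically that $\frac{(p)(p+1)}{n+1} = p + \frac{p}{p-n+1}\cdot\frac{(p-n)(p-n+1)}{n+1}$, and then deduce the general $i$ from the peeling identity $u_{ip}=\frac{p+1}{p-i+2}u_{1,p-i+1}$ together with the $i=1$ result at item count $p-i+1$, i.e. $u_{ip} = \frac{p+1}{p-i+2}\cdot\frac{(p-i+1)(p-i+2)}{n+1} = \frac{(p-i+1)(p+1)}{n+1}$, which is exactly the claim.

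The main obstacle is purely bookkeeping: getting the index shifts right so that every application of a recursion lands on an item count in the correct residue class, and confirming the two algebraic identities (the telescoping product $\prod_{j=0}^{i-2}\frac{p+1-j}{p-j} = \frac{p+1}{p-i+2}$ and the agent-$1$ self-consistency identity above). There is no genuine difficulty beyond verifying these. One should also double-check the boundary conventions — the cases $q = p-i+1 = 0$ and $p = i-1 \leqslant n$ — since the decoupled recursion $u_{1p}=p+\frac{p}{p-n+1}u_{1,p-n}$ is only valid for $p>n$, and the formula must be confirmed to hold by direct computation in the low-$p$ regime rather than by the recursion. Once the $i=1$ closed form is in hand, the general case is a two-line corollary, so I would present the proof in that order: reduce to agent~$1$, solve agent~$1$, then deduce the rest.
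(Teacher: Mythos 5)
Your proposal is correct and follows essentially the same route as the paper: first solve the agent-$1$ case in closed form by induction along $p\equiv 0\pmod n$ using the decoupled recursion, then transfer to agent $i$ via the within-round recursion $u_{ip}=\frac{p+1}{p}u_{i-1,p-1}$. The only cosmetic difference is that you telescope all $i-1$ applications at once to get $u_{ip}=\frac{p+1}{p-i+2}u_{1,p-i+1}$, whereas the paper unrolls the same identity as a one-step induction on $i$.
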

\begin{proof}
  We start with $i=1$ and prove by induction that $u_{1p}=\frac{p(p+1)}{n+1}$ for all $p\equiv 0\pmod n$. The induction starts at $p=n$ with $u_{1p}=n=p(p+1)/(n+1)$. For $p>n$ with $p\equiv 0\pmod n$, we have by induction
  \begin{align*}
u_{1p}&=p+\frac{p}{p-n+1}u_{1,p-n}\\ &=p+\frac{p}{p-n+1}\frac{(p-n)(p-n+1)}{n+1} =\frac{p(p+1)}{n+1}.
\end{align*}
Now we proceed by induction on $i$. For $i\geqslant 2$ our recursion gives
  \begin{align*}
u_{ip}= &\frac{p+1}{p}u_{i-1,p-1} \\ =&\frac{p+1}{p}\frac{((p-1)-(i-1)+1)((p-1)+1)}{n+1}\\ =&\frac{(p-i+1)(p+1)}{n+1}.\qedhere
\end{align*}
\end{proof}
For the remaining residue classes mod $n$ we provide asymptotic statements.
\begin{proposition}\label{prop:asymptotics}
For fixed $n$ the expected utility of agent $i\in\{1,2,\ldots,n\}$ equals
\[u_{ip}=\frac{p^2}{n+1}+O(p).\]
\end{proposition}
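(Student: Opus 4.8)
The plan is to reduce the statement for a general agent $i$ to the case $i=1$, and then to estimate $u_{1p}$ directly from its decoupled recursion, using the exact values from Proposition~\ref{prop:exact_values} as an anchor.

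First I would iterate the recursion $u_{ip}=\tfrac{p+1}{p}u_{i-1,p-1}$ (valid for $i\in\{2,\ldots,n\}$ and $p\geqslant 2$) down to agent $1$. The factors telescope, yielding the identity
\[u_{ip}=\frac{p+1}{p-i+2}\,u_{1,p-i+1}\qquad(p\geqslant i).\]
Since $i\leqslant n$ is fixed, $\tfrac{p+1}{p-i+2}=1+O(1/p)$ and $p-i+1=p+O(1)$, so once we know $u_{1q}=\tfrac{q^2}{n+1}+O(q)$ it follows that $u_{ip}=\tfrac{p^2}{n+1}+O(p)$ for every $i$; summing over $i$ then also gives $\overline{\sw}(\pi)=\tfrac{np^2}{n+1}+O(p)$. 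Hence everything reduces to the first agent.

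For agent $1$ the relevant facts are $u_{1p}=p$ for $p\leqslant n$ and $u_{1p}=p+\tfrac{p}{p-n+1}u_{1,p-n}$ for $p>n$. I would prove, for a suitable constant $c=c(n)$ and all sufficiently large $p$, the two-sided bound
\[\frac{(p-c)(p-c+1)}{n+1}\ \leqslant\ u_{1p}\ \leqslant\ \frac{(p+c)(p+c+1)}{n+1}\]
by induction along each residue class modulo $n$. Each class is anchored at a base value $u_{1,p_0}=p_0$ with $p_0\in\{1,\ldots,n\}$, and $c$ is chosen large enough that both inequalities hold there. The key observation for the inductive step is that the recursion is \emph{exactly tight} at shift $0$: substituting $u_{1,p-n}=\tfrac{(p-n)(p-n+1)}{n+1}$ into $u_{1p}=p+\tfrac{p}{p-n+1}u_{1,p-n}$ returns precisely $\tfrac{p(p+1)}{n+1}$ --- this is Proposition~\ref{prop:exact_values} for $i=1$. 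Consequently, the difference between the two sides of the inequality one wants to maintain, regarded as a function of the shift parameter, vanishes at $0$ and is a quadratic with positive leading coefficient, so a one-line computation of its slope at $0$ shows that, for $p$ beyond a threshold depending on $n$ and $c$, the upper inequality is preserved with shift $+c$ and the lower with shift $-c$. This carries both bounds from the base of each residue class to all larger members of the class, giving $u_{1p}=\tfrac{p^2}{n+1}+O(p)$ and, via the reduction above, the proposition.

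I expect the main obstacle to be this last induction. Running it with the exactly tight bound ($c=0$) fails, because for small $p_0$ the true base value $u_{1,p_0}=p_0$ is far from $\tfrac{p_0(p_0+1)}{n+1}$; one must therefore carry a genuine slack through the recursion and check that it neither grows nor eventually collapses as $p$ increases, which is exactly what the slope computation at shift $0$ guarantees. Once that estimate is in place, the remaining steps are routine.
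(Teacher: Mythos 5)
Your argument is correct in its essentials but takes a genuinely different route from the paper. The paper's proof never analyses the recursion away from the special residue classes: it sandwiches $u_{ip}$ between $u_{i,p-k}$ and $u_{i,p+(n-k)}$, where $p-k$ and $p+(n-k)$ lie in the class $i-1\pmod n$ covered by the exact formula of Proposition~\ref{prop:exact_values}, and reads off $\frac{p^2}{n+1}-2p\leqslant u_{ip}\leqslant\frac{p^2}{n+1}+2p+n$. That is shorter, but it silently relies on monotonicity of $u_{ip}$ in $p$, which is plausible yet not justified anywhere in the paper. Your telescoping $u_{ip}=\frac{p+1}{p-i+2}\,u_{1,p-i+1}$ is correct and cleanly removes the agent index, and your observation that the agent-$1$ recursion is \emph{exactly} tight at $\frac{p(p+1)}{n+1}$ is the right engine; your route is self-contained where the paper's is not, at the price of a longer stability analysis.

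Two caveats on the execution. First, the part you call routine hides a circularity that must be resolved: the slope computation preserves the upper shifted bound only when roughly $c\leqslant\frac{2p-n+1}{n-1}$, so the threshold on $p$ grows linearly in $c$, while $c$ must in turn be large enough to cover every $p$ below that threshold directly. Moreover $c$ cannot be taken $O(1)$: for the class $p_0=1$ one has $u_{1,n+1}=\frac{3(n+1)}{2}$, which exceeds $\frac{(n+1)(n+2)}{n+1}=n+2$ by $\frac{n-1}{2}$, forcing $c$ of order $n$. The loop does close --- e.g.\ $c=2n$ with threshold $p^*=n^2$, checking $p\leqslant n^2$ against the crude bound $u_{1p}\leqslant p\lceil p/n\rceil$ --- but this needs to be said. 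Second, your own tightness observation yields a cleaner variant that avoids shifts altogether: with $v_p=u_{1p}-\frac{p(p+1)}{n+1}$ the recursion becomes exactly $v_p=\frac{p}{p-n+1}v_{p-n}$, so $\lvert v_p\rvert=\lvert v_{p_0}\rvert\prod_{l}\bigl(1+\frac{n-1}{p_0+(l-1)n+1}\bigr)=O\bigl(p^{1-1/n}\bigr)=O(p)$, with no two-sided induction hypothesis to maintain.
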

\begin{proof}
For $p\equiv i-1\pmod n$ this follows from Proposition \ref{prop:exact_values}. Otherwise let $k$ be the unique element of $\{1,\ldots,n-1\}$ such that $p-k\equiv i-1\pmod n$. The following estimates prove the claim. From
\begin{align*}
  u_{i,p-k} &\leqslant u_{ip}\leqslant u_{i,p+(n-k)}\\
\end{align*}
it follows that
\begin{align*}
u_{ip} \geqslant & \frac{(p-k-i+1)(p-k+1)}{(n+1)},  \\
u_{ip} \leqslant & \frac{(p+(n-k)-i+1)(p+(n-k)+1)}{n+1}.
\end{align*}
This gives
\begin{align*}
\frac{p^2}{n+1}-2p&\leqslant u_{ip}\leqslant \frac{p^2}{n+1}+2p+n.\qedhere
\end{align*}
\end{proof}
\begin{corollary}
The expected utilitarian social welfare for the alternating policy is $\displaystyle\frac{np^2}{n+1}+O(p)$.
\end{corollary}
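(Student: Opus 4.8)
The plan is to obtain the statement immediately from Proposition~\ref{prop:asymptotics} together with linearity of expectation. Writing $\pi$ for the strictly alternating policy of length $p$, the expected utilitarian social welfare is $\overline{\sw}(\pi)=\sum_{i=1}^n\overline{u_i}(\pi)=\sum_{i=1}^n u_{ip}$ by definition, so it suffices to sum the per-agent asymptotics already established.

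First I would record the explicit two-sided bound proved inside Proposition~\ref{prop:asymptotics}: for fixed $n$, every $i\in\{1,\ldots,n\}$ and every $p$,
\[
\frac{p^2}{n+1}-2p\ \leqslant\ u_{ip}\ \leqslant\ \frac{p^2}{n+1}+2p+n .
\]
Since there are exactly $n$ agents and $n$ is fixed, adding these $n$ inequalities gives
\[
\frac{np^2}{n+1}-2np\ \leqslant\ \overline{\sw}(\pi)\ \leqslant\ \frac{np^2}{n+1}+2np+n^2 ,
\]
and both extreme terms differ from $\frac{np^2}{n+1}$ by $O(p)$, which is exactly the claim.

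There is no real obstacle here; the only point worth flagging is bookkeeping of the implied constant. The $O(p)$ error absorbs a quantity that is itself linear in $n$, so — consistent with the hypothesis ``for fixed $n$'' in Proposition~\ref{prop:asymptotics} — the corollary should be read with $n$ treated as a constant. If a bound uniform in both $n$ and $p$ were wanted, the estimates above would need to be sharpened, but that is beyond what this corollary asserts.
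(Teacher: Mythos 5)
Your proof is correct and is exactly the intended argument: the paper states this corollary without proof as an immediate consequence of Proposition~\ref{prop:asymptotics}, obtained by summing the per-agent asymptotics $u_{ip}=\frac{p^2}{n+1}+O(p)$ over the $n$ agents with $n$ fixed. Your explicit two-sided bounds and the remark about the $n$-dependence of the implied constant are a faithful (slightly more careful) rendering of that same step.
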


\section{Proof of Proposition \ref{prop:expectaition_bestpref}  for $n$ agents }\label{sec:proof_prop_expectaition_bestpref}
\begin{proposition}\label{prop:asymp_upper_bound}
Let $\pi$ be the $\BestPref$ policy with $n$ agents using
Borda utility functions.
The expected utilitarian social welfare is
\[\frac{np^2}{n+1}+\frac{p}{2}+O(1).\]
\end{proposition}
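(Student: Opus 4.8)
The plan is to compute $\overline{\sw}(\BestPref)$ by a per-item decomposition. The key observation is that, no matter how ties are resolved, the agent who receives an item is always one who ranks that item in its globally best (smallest) position; hence the Borda value actually collected for item $j$ equals $p+1-M_j$, where $M_j$ is the minimum, over all $n$ agents, of the position of item $j$ in that agent's preference order. By linearity of expectation, $\overline{\sw}(\BestPref)=\sum_{j=1}^p\bigl(p+1-\expect{M_j}\bigr)$, and since the items are symmetric this is $p(p+1)-p\,\expect{M_1}$.

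Next I would pin down the distribution of $M_1$. For a fixed item, its position in agent $i$'s uniformly random preference order is uniform on $\{1,\dots,p\}$, and these positions are independent across agents; so $M_1$ is the minimum of $n$ i.i.d.\ uniform variables on $\{1,\dots,p\}$. Then $\expect{M_1}=\sum_{k\geq 1}\prob{M_1\geq k}=\sum_{k=1}^p\bigl((p-k+1)/p\bigr)^n=p^{-n}\sum_{m=1}^p m^n$, so everything reduces to the asymptotics of the power sum. By Faulhaber's formula (equivalently Euler--Maclaurin applied to $x\mapsto x^n$ on $[0,1]$ with step $1/p$), $\sum_{m=1}^p m^n=\frac{p^{n+1}}{n+1}+\frac{p^n}{2}+O(p^{n-1})$, whence $\expect{M_1}=\frac{p}{n+1}+\frac12+O(1/p)$.

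Substituting back gives $\overline{\sw}(\BestPref)=p(p+1)-p\bigl(\frac{p}{n+1}+\frac12+O(1/p)\bigr)=\frac{np^2}{n+1}+\frac{p}{2}+O(1)$, which is the claim; as a consistency check, for $n=2$ one gets $\expect{M_1}=\frac{(p+1)(2p+1)}{6p}$ and the formula collapses to the exact value $\frac{(p+1)(4p-1)}{6}$ of Proposition~\ref{prop:expectaition_bestpref}. The only delicate point is the second-order behaviour of the power sum: a crude integral comparison only establishes the leading term $\frac{np^2}{n+1}$, so to isolate the $\frac{p}{2}$ term and certify that the remainder is genuinely $O(1)$ one must retain the $\frac{p^n}{2}$ correction (the $B_1$ term) in Faulhaber's formula; the rest is routine.
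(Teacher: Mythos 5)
Your proof is correct and follows essentially the same route as the paper: a per-item decomposition of the welfare into the expected extremum of $n$ i.i.d.\ uniform ranks, reduced to the second-order Faulhaber asymptotics of $\sum_{m=1}^p m^n$ (the paper computes $\expect{\max_i\alpha_{iq}}$ from its pmf, which is just the dual of your tail-sum computation of $\expect{M_j}$ for the minimum rank). Your consistency check against the exact two-agent formula and your remark that the $\tfrac{p^n}{2}$ term must be retained are both apt.
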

\begin{proof}
The expected utilitarian social welfare for this procedure is the expected value of the random variable
\[X=\sum_{q=1}^p\max_{1\leqslant i\leqslant n}\alpha_{iq}\]
where the $n$ vectors $\vect{\alpha_i}=(\alpha_{i1},\alpha_{i2},\ldots,\alpha_{ip})$ are random permutations of the set $\{1,2,\ldots,p\}$ that are drawn independent and uniform from the set of all $p!$ permutations. The interpretation is that we fix an order of the items, and $a_{iq}$ is the value of item $q$ for agent $i$ according to her random preference order. We decompose $X$ as a sum of random variables 
\[X_q=\max_{1\leqslant i\leqslant n}\alpha_{iq}\]
and calculate the expected value of these. For the probability that $X_q$ takes value $j$ we can write
\begin{align*}
\prob{X_q=j}=&\sum_{k=1}^n\binom{n}{k}\left(\frac1p\right)^k\left(\frac{j-1}{p}\right)^{n-k} \\ =&\left(\frac1p+\frac{j-1}{p}\right)^n-\left(\frac{j-1}{p}\right)^n\\
=&\left(\frac{j}{p}\right)^n-\left(\frac{j-1}{p}\right)^n.
\end{align*}
The expected value of $X_q$ is
\begin{multline*}
\expect{X_q}=\sum_{j=1}^pj\left(\left(\frac{j}{p}\right)^n-\left(\frac{j-1}{p}\right)^n\right)\\
=\frac{1}{p^n}\sum_{j=1}^pj\left[j^n-(j-1)^n\right] = \frac{1}{p^n}\left[\sum_{j=1}^pj^{n+1}-\sum_{j=0}^{p-1}(j+1)j^n\right] \\
=\frac{1}{p^n}\left[\sum_{j=1}^pj^{n+1}-\sum_{j=0}^{p-1}j^{n+1}-\sum_{j=0}^{p-1}j^{n}\right] \\
= \frac{1}{p^n}\left[p^{n+1}-\frac{1}{n+1}(p-1)^{n+1}-\frac12(p-1)^n+O(p^{n-1})\right] \\
= \frac{1}{p^n}\left[\frac{n}{n+1}p^{n+1}+\frac12p^n+O(p^{n-1})\right]=\frac{np}{n+1}+\frac12+O(1/p).
\end{multline*}
Now summation over $q$ yields the result.
\end{proof}

\section{Analysis of the proof of the asymptotic optimality of balanced policies}\label{sec:bouveret_lang_gap}
Bouveret and Lang claim that every sequence of balanced policies is asymptotically optimal. In fact, their statement is even a bit stronger in that they do not require $\pi_{kn+1},\ldots,\pi_{kn+q}$ to be pairwise distinct (their $\theta$ is any agent sequence). Our argument in Section \ref{sec:comparison} can be adapted to yield this stronger result. In the present section we point out some serious gaps in the proof given by Bouveret and Lang.

For $i=1,\ldots,k$ let the sequence of stages $(i-1)n+1,(i-1)n+2,\ldots,in$ be called the $i$-th \emph{round}, i.e. for a balanced policy, in every round each agent picks exactly one item.

Bouveret and Lang start with the observation that in the first round the first agent gets utility $p$ and the second one $(p^2-1)/p=(1+o(1))p$. They say that the third agent gets $\Theta(p)$ which is too weak for what they want to derive: If the third agent would get $p/2$ this would be $\Theta(p)$ but not $p+O(p^{-1})$ which is what the claim next. The proof of this expectation of $p+O(p^{-1})$ is already nontrivial and could be done as follows. The $j$-th agent of the first round gets her most preferred item with probability $\tbinom{p-1}{j-1}/\tbinom{p}{j-1}=\frac{p-j+1}{p}$ and her second most preferred item with probability $\tbinom{p-2}{j-2}/\tbinom{p}{j-1}=\frac{(j-1)(p-j+1)}{p(p-1)}$, so her expected utility from the first round is at least
\[p-j+1+\frac{(j-1)(p-j+1)}{p}=p-\frac{(j-1)^2}{p}=p+O(p^{-1}).\]

Then Bouveret and Lang continue by stating that in the second round the starting agent gets her second preferred item with probability $1-\frac{n-1}{p-1}$. This is only true if the second round starts with the same agent as the first round. Otherwise one has to take into account the probability, that the first agent of the second round took her second favourite item already in the first round (because her first choice was not available). For instance if the starting agent of the second round was second in the first round then her probability to pick her second preferred item in the second round equals
\[\frac{p-2}{p}\cdot\frac{\binom{p-3}{n-2}}{\binom{p-2}{n-2}}=1-\frac{n}{p}.\]
They argue that the starting agent of round two gets utility at least $(1-\frac{n-1}{p-1})(p-1)=p-1+O(p^{-1})$, and this last equality is clearly wrong. In order to show that every agent from the second round gets utility at least $p-1+O(p^{-1})$ it would be necessary to consider not only the probability that the agent gets her second most preferred item in the second round, but also probabilities for other items (just like for the first round we had to take into account the most preferred and the second most preferred item). It seems possible that this can be done (maybe just the third preferred item is sufficient), but it is in no way obvious how to do it.

It seems to be very difficult to generalize this from the second round to the following rounds. Their claim is that in round $i$ every agent gets utility $p-i+1+O(p^{-1})$. It might be that this is true (although highly non-obvious) for bounded $i$, but Bouveret and Lang use this statement for all $i$ up to $k$ (which tends to infinity with $p$). Even if the $p-i+1+O(p^{-1})$ utility for round $i$ would be correct, their calculation of the total utility $\left[p+O(p^{-1})+\cdots+(p+k-1)+O(p^{-1})\right]$ of any agent is still wrong. It should be:
\begin{multline*}
\sum_{i=1}^{k}(p-i+1+O(p^{-1}))=kp-\frac{k(k-1)}{2}+O(1)\\ =\frac{(2n-1)p^2}{2n^2}+\frac{p}{2n}+O(1).
\end{multline*}

\section{Proof of Theorem \ref{thm:real_induction}}\label{sec:proof_main_thorem}
\subsection{Technical lemmas}\label{subsec:preliminaries}
We start with the observation that
\begin{equation}\label{eq:overlinegamma_recursion}
\overline\gamma_{k+1}=\begin{cases} \overline\gamma_k & \text{if $k$ is even},\\ \frac{k}{k+1}\overline\gamma_k & \text{if $k$ is odd}.\end{cases}
\end{equation}
In the following lemma we describe the functions $G_{km}$ and $F_{km}$ explicitly.
\begin{lemma}\label{lem:aux_1}
For $k\geqslant 1$, $m\geqslant 0$, and $(x',y')=G_{km}(x,y)$, and $(x'',y'')=F_{km}(x,y)$, we have
  \begin{align*}
    x' &= x+\sum_{j=1}^m\delta_{k+j}, & \\
    y' &= (k+m+1)\left(\frac{y}{k+1}-\sum_{j=1}^m\frac{\delta_{k+j}}{k+j}\right).
  \end{align*}
For $k\geqslant 1$, $m\geqslant 1$, and $(x'',y'')=F_{km}(x,y)$, we have
  \begin{align*}
    x'' &= y+\sum_{j=2}^{m}\delta_{k+j}, & \\
    y'' &= (k+m+1)\left(\frac{x}{k+1}-\sum_{j=2}^{m}\frac{\delta_{k+j}}{k+j}\right).
  \end{align*}
\end{lemma}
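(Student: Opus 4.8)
The plan is to prove Lemma~\ref{lem:aux_1} by induction on $m$, using the defining recursions $f_k(x,y)=\bigl(y,\frac{k+2}{k+1}x\bigr)$ and $g_k(x,y)=\bigl(x+\delta_{k+1},\frac{k+2}{k+1}(y-\delta_{k+1})\bigr)$, together with the definitions $G_{k0}=\mathrm{id}$, $G_{km}=g_{k+m-1}\circ G_{k,m-1}$, and $F_{km}=G_{k+1,m-1}\circ f_k$. The statement for $G_{km}$ is the real work; the statement for $F_{km}$ then falls out by a single composition.

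First I would establish the $G_{km}$ formulas. The base case $m=0$ is immediate: $G_{k0}(x,y)=(x,y)$, and the claimed formulas give $x'=x$ (empty sum) and $y'=(k+1)\cdot\frac{y}{k+1}=y$. For the inductive step, assume the formulas hold for $G_{km}$ and apply $g_{k+m}$ to $(x',y')$. The new first coordinate is $x'+\delta_{k+m+1}=x+\sum_{j=1}^{m}\delta_{k+j}+\delta_{k+m+1}=x+\sum_{j=1}^{m+1}\delta_{k+j}$, as required. The new second coordinate is $\frac{k+m+2}{k+m+1}(y'-\delta_{k+m+1})$; substituting $y'=(k+m+1)\bigl(\frac{y}{k+1}-\sum_{j=1}^m\frac{\delta_{k+j}}{k+j}\bigr)$ gives
\begin{align*}
\frac{k+m+2}{k+m+1}\left[(k+m+1)\left(\frac{y}{k+1}-\sum_{j=1}^m\frac{\delta_{k+j}}{k+j}\right)-\delta_{k+m+1}\right]
&=(k+m+2)\left(\frac{y}{k+1}-\sum_{j=1}^{m+1}\frac{\delta_{k+j}}{k+j}\right),
\end{align*}
which is exactly the claimed formula for the second coordinate of $G_{k,m+1}$. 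This closes the induction.

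For $F_{km}$ with $m\geqslant 1$, I would write $F_{km}=G_{k+1,m-1}\circ f_k$ and apply the just-proved $G$-formula with the index $k$ replaced by $k+1$ and $m$ replaced by $m-1$, to the point $f_k(x,y)=\bigl(y,\frac{k+2}{k+1}x\bigr)$. The first coordinate becomes $y+\sum_{j=1}^{m-1}\delta_{(k+1)+j}=y+\sum_{j=2}^{m}\delta_{k+j}$ after reindexing. The second coordinate becomes
\[
\bigl((k+1)+(m-1)+1\bigr)\left(\frac{1}{(k+1)+1}\cdot\frac{k+2}{k+1}x-\sum_{j=1}^{m-1}\frac{\delta_{(k+1)+j}}{(k+1)+j}\right)=(k+m+1)\left(\frac{x}{k+1}-\sum_{j=2}^{m}\frac{\delta_{k+j}}{k+j}\right),
\]
using $\frac{1}{k+2}\cdot\frac{k+2}{k+1}=\frac{1}{k+1}$ and reindexing the sum, which matches the claim. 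The only mild subtlety to watch is the empty-sum convention at $m=1$ (where both $F$-sums $\sum_{j=2}^{1}$ are empty), but this is consistent with the paper's stated convention that empty sums are zero; I expect the main obstacle to be nothing deeper than keeping the index shifts in $F_{km}=G_{k+1,m-1}\circ f_k$ straight, since no inequality or asymptotic estimate is needed here — it is a purely algebraic unwinding of the recursions.
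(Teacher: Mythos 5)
Your proposal is correct and follows essentially the same route as the paper: induction on $m$ for the $G_{km}$ formulas (the paper treats the first coordinate as immediate and inducts only on the second, but the computation in the inductive step is identical), followed by deriving the $F_{km}$ formulas from $(x'',y'')=G_{k+1,m-1}\bigl(y,\tfrac{k+2}{k+1}x\bigr)$ via reindexing. The algebra and the handling of the empty-sum case at $m=1$ both check out.
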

\begin{proof}
The expression for $x'$ follows immediately from the definitions. For $y'$ we proceed by induction on $m$. The start for $m=0$ is trivial: $y'=y$. So assume $m\geqslant 1$ and let $(\tilde x,\tilde y)=G_{k,m-1}(x,y)$. Then $(x',y')=g_{k+m-1}(\tilde x,\tilde y)$, and using the induction hypothesis we obtain
\begin{multline*}
y'=\frac{k+m+1}{k+m}\left(\tilde y-\delta_{k+m}\right)\\ = \frac{k+m+1}{k+m}\left((k+m)\left(\frac{y}{k+1}-\sum_{j=1}^{m-1}\frac{\delta_{k+j}}{k+j}\right)-\delta_{k+m}\right)\\
= (k+m+1)\left(\frac{y}{k+1}-\sum_{j=1}^m\frac{\delta_{k+j}}{k+j}\right).
\end{multline*}
Finally, the expressions for $x''$ and $y''$ follow immediately from $(x'',y'')=G_{k+1,m-1}\left(y,\frac{k+2}{k+1}x\right)$.
\end{proof}
In the following two lemmas we calculate how the functions $G_{km}$ and $F_{km}$ affect the coordinate sum $(x,y)$.
\begin{lemma}\label{lem:aux_2}
Let $k\geqslant 1$, $m\geqslant 0$, and $(x',y')=G_{km}(x,y)$.
\begin{enumerate}
\item If $k$ is odd, then
  \begin{equation}\label{eq:G_for_odd_k}
  \begin{split}
x'+y' = &x+y+\frac{m}{k+1}y- \frac{m(m+1)}{6}-\\ &  \frac13\sum_{j=1}^{\lfloor(m+1)/2\rfloor}\overline\gamma_{k+2j-1}.
     \end{split}
  \end{equation}
\item If $k$ is even, then
  \begin{equation}\label{eq:G_for_even_k}
    \begin{split}
x'+y'=&x+y+\frac{m}{k+1}y-\frac{m(m+1)}{6}+ \\ &\frac{m}{3}\overline\gamma_{k+1}-\frac13\sum_{j=1}^{\lfloor m/2\rfloor}\overline\gamma_{k+2j}.
      \end{split}
  \end{equation}
\end{enumerate}
\end{lemma}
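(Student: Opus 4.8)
The plan is to prove Lemma~\ref{lem:aux_2} by induction on $m$, using the explicit formula for $y'$ from Lemma~\ref{lem:aux_1} together with the recursion~\eqref{eq:overlinegamma_recursion} for $\overline\gamma_k$. First I would record the key identity. From Lemma~\ref{lem:aux_1} we have $x'+y' = x+\sum_{j=1}^m\delta_{k+j} + (k+m+1)\bigl(\tfrac{y}{k+1}-\sum_{j=1}^m\tfrac{\delta_{k+j}}{k+j}\bigr)$. Since $\delta_\ell=\tfrac13(\ell+(-1)^\ell\gamma_\ell)$ and $\overline\gamma_\ell=\gamma_\ell/\ell$, we get $\tfrac{\delta_{k+j}}{k+j}=\tfrac13\bigl(1+(-1)^{k+j}\overline\gamma_{k+j}\bigr)$, so $\sum_{j=1}^m\tfrac{\delta_{k+j}}{k+j}=\tfrac{m}{3}+\tfrac13\sum_{j=1}^m(-1)^{k+j}\overline\gamma_{k+j}$ and $\sum_{j=1}^m\delta_{k+j}=\tfrac13\sum_{j=1}^m(k+j)+\tfrac13\sum_{j=1}^m(-1)^{k+j}\gamma_{k+j}$. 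Substituting and simplifying the polynomial part (the $y$-coefficient becomes $\tfrac{k+m+1}{k+1}=1+\tfrac{m}{k+1}$, accounting for the $x+y+\tfrac{m}{k+1}y$ terms) leaves one to show that the remaining terms equal $-\tfrac{m(m+1)}{6}$ plus the stated alternating-$\overline\gamma$ sum; this is a purely bookkeeping computation once one separates by parity of $k$.

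The cleaner route, which I would actually carry out, is induction on $m$ directly on the coordinate-sum recursion. For the base case $m=0$ both sides reduce to $x+y$ (empty sums are zero by the paper's convention). For the inductive step, suppose the claim holds for $G_{k,m-1}(x,y)=(\tilde x,\tilde y)$, and write $(x',y')=g_{k+m-1}(\tilde x,\tilde y)=\bigl(\tilde x+\delta_{k+m},\,\tfrac{k+m+1}{k+m}(\tilde y-\delta_{k+m})\bigr)$. Then
\[
x'+y' = \tilde x+\tilde y + \frac{1}{k+m}\tilde y + \delta_{k+m} - \frac{k+m+1}{k+m}\delta_{k+m} = \tilde x+\tilde y+\frac{\tilde y}{k+m}-\frac{\delta_{k+m}}{k+m}.
\]
Here I would use the explicit value $\tilde y = (k+m)\bigl(\tfrac{y}{k+1}-\sum_{j=1}^{m-1}\tfrac{\delta_{k+j}}{k+j}\bigr)$ from Lemma~\ref{lem:aux_1} to rewrite $\tfrac{\tilde y}{k+m} = \tfrac{y}{k+1}-\sum_{j=1}^{m-1}\tfrac{\delta_{k+j}}{k+j}$, plug in the inductive formula for $\tilde x+\tilde y$, and collect terms. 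The $\tfrac{y}{k+1}$ contribution upgrades the coefficient of $y$ from $\tfrac{m-1}{k+1}$ to $\tfrac{m}{k+1}$; the $-\sum_{j=1}^{m-1}\tfrac{\delta_{k+j}}{k+j}-\tfrac{\delta_{k+m}}{k+m}=-\sum_{j=1}^{m}\tfrac{\delta_{k+j}}{k+j}$ part, together with $\tfrac{\delta_{k+j}}{k+j}=\tfrac13(1+(-1)^{k+j}\overline\gamma_{k+j})$, produces the increment $-\tfrac13 - \tfrac13(-1)^{k+m}\overline\gamma_{k+m}$, which must be reconciled with the jump from the $m-1$ right-hand side to the $m$ right-hand side.

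The one genuinely fiddly point — and the step I expect to be the main obstacle — is verifying that this increment matches, separately for $k$ odd and $k$ even, the difference between consecutive values of $-\tfrac{m(m+1)}{6}$ and the $\overline\gamma$-sums that appear in~\eqref{eq:G_for_odd_k} and~\eqref{eq:G_for_even_k}. The polynomial part is easy: $-\tfrac{m(m+1)}{6}+\tfrac{(m-1)m}{6}=-\tfrac{m}{3}$, and one factor of $\tfrac13$ from it cancels against part of the increment. The delicate part is the $\overline\gamma$ bookkeeping. Using~\eqref{eq:overlinegamma_recursion}, one must check how the floor functions $\lfloor(m+1)/2\rfloor$ (odd $k$) and $\lfloor m/2\rfloor$ (even $k$) change when $m\to m-1$, distinguishing $m$ even from $m$ odd, and confirm that the newly added $\overline\gamma$ term is exactly the $(-1)^{k+m}\overline\gamma_{k+m}$ appearing in the increment — after possibly rewriting $\overline\gamma_{k+m}$ via~\eqref{eq:overlinegamma_recursion} to match the index $k+2j-1$ or $k+2j$ in the stated sums. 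A careful four-way case split ($k$ parity $\times$ $m$ parity) settles it, and the even-$k$ formula additionally needs the $\tfrac{m}{3}\overline\gamma_{k+1}$ term, which arises because $\overline\gamma_{k+1}=\overline\gamma_k$ when $k$ is even (so the sign pattern $(-1)^{k+j}$ behaves differently at $j=1$). Once the parity cases are organized, each is a one-line verification.
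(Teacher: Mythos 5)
Your proposal is correct, and your first route is exactly the paper's proof: substitute the closed forms from Lemma~\ref{lem:aux_1}, write $\delta_{k+j}/(k+j)=\tfrac13\bigl(1+(-1)^{k+j}\overline\gamma_{k+j}\bigr)$, extract $-m(m+1)/6$ from the weights $m+1-j$, and collapse the weighted alternating sum $\sum_{j=1}^m(-1)^{k+j}(m+1-j)\overline\gamma_{k+j}$ using the pairing $\overline\gamma_{l+1}=\overline\gamma_l$ for even $l$ from~\eqref{eq:overlinegamma_recursion}. Your preferred route, induction on $m$, is a legitimate reorganization of the same computation and is in fact marginally cleaner at the crux: each inductive step only requires the \emph{unweighted} identity for $\sum_{j=1}^m(-1)^{k+j}\overline\gamma_{k+j}$, whereas the paper invokes the weighted version in one shot. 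One correction to your inductive step, though: after substituting $\tilde y/(k+m)=y/(k+1)-\sum_{j=1}^{m-1}\delta_{k+j}/(k+j)$, the quantity added to the closed form of $\tilde x+\tilde y$ is $y/(k+1)-\sum_{j=1}^{m}\delta_{k+j}/(k+j)$, whose $\overline\gamma$-part is the \emph{entire} alternating sum $-\tfrac13\sum_{j=1}^m(-1)^{k+j}\overline\gamma_{k+j}$, not just the single new term $-\tfrac13(-1)^{k+m}\overline\gamma_{k+m}$ — the terms $j\leqslant m-1$ are not already sitting cancelled inside the induction hypothesis, which has been converted to the $\lfloor\cdot\rfloor$-indexed form. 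The argument still closes because this full sum telescopes by~\eqref{eq:overlinegamma_recursion} to $0$ or $\overline\gamma_{k+m}$ for odd $k$, and to $-\overline\gamma_{k+1}$ or $-\overline\gamma_{k+1}+\overline\gamma_{k+m}$ for even $k$ (according to the parity of $m$), which is precisely what reproduces the $\lfloor(m+1)/2\rfloor$ versus $\lfloor m/2\rfloor$ limits and the extra $\tfrac{m}{3}\overline\gamma_{k+1}$ term in~\eqref{eq:G_for_even_k}; it is just not the one-term reconciliation you describe.
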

\begin{proof}
With Lemma \ref{lem:aux_2} we obtain
\begin{equation*}
 \begin{array}{@{}r@{\quad}cl@{}}
 x'+y' &= &x+\sum_{j=1}^m\delta_{k+j}+  \\
 &&(k+m+1)\left(\frac{y}{k+1}-\sum_{j=1}^m\frac{\delta_{k+j}}{k+j}\right)  \\
 &= &x+y+\frac{m}{k+1}y+ \\
 &&\sum_{j=1}^m\left(1-\frac{k+m-1}{k+j}\right)\delta_{k+j} \\
 &=&x+y+\frac{m}{k+1}y- \\
&&\frac13\sum_{j=1}^m\frac{m+1-j}{k+j}\left(k+j+(-1)^{k+j}\gamma_{k+j}\right)  \\
&=&x+y+\frac{m}{k+1}y-\frac13\sum_{j=1}^m(m+1-j) - \\
&&\frac13\sum_{j=1}^m(-1)^{k+j}(m+1-j)\overline\gamma_{k+j} \\
&=&x+y+\frac{m}{k+1}y-\frac{m(m+1)}{6} - \\
&&\frac13\sum_{j=1}^m(-1)^{k+j}(m+1-j)\overline\gamma_{k+j}.
 \end{array}%
\end{equation*}

Using (\ref{eq:overlinegamma_recursion}) it is easy to check that for odd $k$
\[\sum_{j=1}^m(-1)^{k+j}(m+1-j)\overline\gamma_{k+j}=\sum_{j=1}^{\lfloor(m+1)/2\rfloor}\overline\gamma_{k+2j-1},\]
and for even $k$,
\[\sum_{j=1}^m(-1)^{k+j}(m+1-j)\overline\gamma_{k+j}=-m\overline\gamma_k+\sum_{j=1}^{\lfloor m/2\rfloor}\overline\gamma_{k+2j},\]
and this concludes the proof.
\end{proof}
\begin{lemma}\label{lem:aux_3}
Let $k\geqslant 1$, $m\geqslant 1$, and $(x',y')=F_{km}(x,y)$.
\begin{enumerate}
\item If $k$ is odd, then
  \begin{equation}\label{eq:F_for_odd_k}
   \begin{split}
    x'+y'=x+y+\frac{m}{k+1}x-\frac{(m-1)m}{6}+\\
    \frac{m-1}{3}\overline\gamma_{k+2} -\frac13\sum_{j=1}^{\lfloor(m-1)/2\rfloor}\overline\gamma_{k+2j+1}.
     \end{split}
  \end{equation}
\item If $k$ is even, then
  \begin{equation}\label{eq:F_for_even_k}
   \begin{split}
    x'+y'=x+y+\frac{m}{k+1}x-\frac{(m-1)m}{6} \\ -\frac13\sum_{j=1}^{\lfloor m/2\rfloor}\overline\gamma_{k+2j}.
  \end{split}
  \end{equation}
\end{enumerate}
\end{lemma}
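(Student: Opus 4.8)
The final statement to be proved is Lemma~\ref{lem:aux_3}, which gives the explicit formula for the coordinate sum $x'+y'$ when $(x',y')=F_{km}(x,y)$. The plan is to mirror the proof of Lemma~\ref{lem:aux_2} exactly, using the reduction $F_{km}=G_{k+1,m-1}\circ f_k$ already recorded in the text. Concretely, since $f_k(x,y)=\left(y,\frac{k+2}{k+1}x\right)$, I would set $(\hat x,\hat y)=f_k(x,y)$ and then apply the two cases of Lemma~\ref{lem:aux_2} with $k$ replaced by $k+1$ and $m$ replaced by $m-1$, substituting $\hat x = y$ and $\hat y = \frac{k+2}{k+1}x$. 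Note that the parity flips: if $k$ is odd then $k+1$ is even, so one invokes case~2 of Lemma~\ref{lem:aux_2}; if $k$ is even then $k+1$ is odd, so one invokes case~1.

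First I would do the odd-$k$ case. Apply~(\ref{eq:G_for_even_k}) with parameters $(k+1,m-1)$ and inputs $(\hat x,\hat y)=\left(y,\frac{k+2}{k+1}x\right)$:
\[
x'+y'=\hat x+\hat y+\frac{m-1}{k+2}\hat y-\frac{(m-1)m}{6}+\frac{m-1}{3}\overline\gamma_{k+2}-\frac13\sum_{j=1}^{\lfloor(m-1)/2\rfloor}\overline\gamma_{k+1+2j}.
\]
Then I substitute $\hat x=y$ and $\hat y=\frac{k+2}{k+1}x$; the term $\frac{m-1}{k+2}\cdot\frac{k+2}{k+1}x$ simplifies to $\frac{m-1}{k+1}x$, and combined with the $\hat y=\frac{k+2}{k+1}x$ term this should produce $\frac{m}{k+1}x$ after checking $\frac{k+2}{k+1}+\frac{m-1}{k+1}=\frac{k+m+1}{k+1}$ — wait, that gives $\frac{k+m+1}{k+1}x$, not $\frac{m}{k+1}x+x$; but $\frac{k+m+1}{k+1}x = x+\frac{m}{k+1}x$, so indeed it matches. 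The sum index $k+1+2j$ equals $k+2j+1$, matching the claimed formula. The even-$k$ case is analogous but uses~(\ref{eq:G_for_odd_k}) with parameters $(k+1,m-1)$, where there is no $\overline\gamma_{k+1}$-type correction term, and the sum becomes $\sum_{j=1}^{\lfloor m/2\rfloor}\overline\gamma_{k+1+2j-1}=\sum_{j=1}^{\lfloor m/2\rfloor}\overline\gamma_{k+2j}$; one checks $\lfloor((m-1)+1)/2\rfloor=\lfloor m/2\rfloor$, which is exactly the stated upper limit.

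I expect the main obstacle to be purely bookkeeping: getting the index-shift bounds on the $\overline\gamma$ sums right (the floor functions $\lfloor(m-1)/2\rfloor$ versus $\lfloor m/2\rfloor$ and the shift from $k+1$ to $k$ inside the subscripts), and tracking the $\frac{k+2}{k+1}$ factor so the linear-in-$x$ coefficient comes out as $1+\frac{m}{k+1}$ rather than something off by one. There is no conceptual difficulty — everything follows mechanically from Lemma~\ref{lem:aux_2} and the definition $F_{km}=G_{k+1,m-1}\circ f_k$ — but the parity case-split and the careful substitution $\hat x=y$, $\hat y=\frac{k+2}{k+1}x$ must be executed without slips.
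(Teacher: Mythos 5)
Your proof is correct, and it takes a cleaner route than the one the paper intends. The paper proves Lemma~\ref{lem:aux_3} by "proceeding exactly as in the proof of Lemma~\ref{lem:aux_2}": it adds the explicit coordinate formulas for $F_{km}$ from Lemma~\ref{lem:aux_1} (the sums over $\delta_{k+j}$ for $j=2,\ldots,m$) and then redoes the conversion of the $\delta$-sums into $\overline\gamma$-sums via the parity recursion~(\ref{eq:overlinegamma_recursion}), with the index range shifted by one. You instead invoke the composition identity $F_{km}=G_{k+1,m-1}\circ f_k$ at the level of the already-simplified coordinate-sum formulas, applying Lemma~\ref{lem:aux_2} with parameters $(k+1,m-1)$ to the point $\left(y,\frac{k+2}{k+1}x\right)$. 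This buys you two things: the parity flip from $k$ to $k+1$ automatically explains why the odd-$k$ case of Lemma~\ref{lem:aux_3} inherits the $\frac{m-1}{3}\overline\gamma_{k+2}$ correction term (it comes from the even case of Lemma~\ref{lem:aux_2}) while the even-$k$ case has none, and you never have to re-manipulate the telescoping $\overline\gamma$ identities. Your arithmetic checks out: $\frac{k+2}{k+1}+\frac{m-1}{k+1}=\frac{k+m+1}{k+1}$ gives the coefficient $1+\frac{m}{k+1}$ of $x$, the substitutions $m\mapsto m-1$ turn $\frac{m(m+1)}{6}$ into $\frac{(m-1)m}{6}$ and the floor bounds $\lfloor(m+1)/2\rfloor$, $\lfloor m/2\rfloor$ into $\lfloor m/2\rfloor$, $\lfloor(m-1)/2\rfloor$ respectively, and the subscript shifts $k+1+2j-1=k+2j$ and $k+1+2j=k+2j+1$ match the stated formulas. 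The only ingredient you should state explicitly rather than leave implicit is that Lemma~\ref{lem:aux_2} is applicable with second parameter $m-1\geqslant 0$, which holds precisely because Lemma~\ref{lem:aux_3} assumes $m\geqslant 1$.
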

\begin{proof}
We proceed exactly as in the proof of Lemma \ref{lem:aux_2}.
\end{proof}
In the proof of our main result we need some rough bounds on the numbers $\overline\gamma_k$. The following weak estimates will be sufficient for the induction step in the proof of Theorem \ref{thm:real_induction} below.
\begin{lemma}\label{lem:inequality}
For $k\geqslant 2$, $m\geqslant 0$ we have
\begin{align*}
\overline\gamma_k-\overline\gamma_{k+2\lfloor(m+1)/2\rfloor}&\leqslant\frac{m(m+1)}{2k}, \\
\overline\gamma_{k+1}-\overline\gamma_{k+2\lfloor m/2\rfloor+1}&\leqslant\frac{m^2}{2k}.
\end{align*}
\end{lemma}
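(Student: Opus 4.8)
The quantity $\overline\gamma_k = \gamma_k/k$ where $\gamma_k = \prod_{j=1}^{\lfloor(k-1)/2\rfloor}\frac{2j+1}{2j}$, so $\gamma_k \sim \sqrt{2k/\pi}$ and hence $\overline\gamma_k \sim \sqrt{2/(\pi k)}$ is a decreasing sequence tending to $0$; both differences on the left-hand side are therefore nonnegative, which is the crux of why such crude bounds suffice. The plan is to reduce both inequalities to a single telescoping estimate for consecutive ratios $\overline\gamma_\ell/\overline\gamma_{\ell+2}$, using the recursion~(\ref{eq:overlinegamma_recursion}). Combining the two cases of~(\ref{eq:overlinegamma_recursion}) over two consecutive steps gives, for any $\ell\geqslant 2$, that $\overline\gamma_{\ell+2} = \frac{\ell}{\ell+1}\cdot\frac{\ell+1}{\ell+2}\,\overline\gamma_\ell$ when $\ell$ is odd, wait—more carefully: if $\ell$ is odd then $\overline\gamma_{\ell+1}=\frac{\ell}{\ell+1}\overline\gamma_\ell$ and $\ell+1$ is even so $\overline\gamma_{\ell+2}=\overline\gamma_{\ell+1}=\frac{\ell}{\ell+1}\overline\gamma_\ell$; if $\ell$ is even then $\overline\gamma_{\ell+1}=\overline\gamma_\ell$ and $\ell+1$ is odd so $\overline\gamma_{\ell+2}=\frac{\ell+1}{\ell+2}\overline\gamma_{\ell+1}=\frac{\ell+1}{\ell+2}\overline\gamma_\ell$. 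In either case $\overline\gamma_\ell - \overline\gamma_{\ell+2} \leqslant \frac{1}{\ell+1}\overline\gamma_\ell \leqslant \frac{1}{\ell}\overline\gamma_\ell \leqslant \frac{1}{\ell}\overline\gamma_2 = \frac{1}{\ell}$, using $\overline\gamma_2 = \gamma_2/2 = 1/2 \leqslant 1$; in fact $\overline\gamma_\ell\leqslant 1$ for all $\ell\geqslant 1$ since $\overline\gamma_1=1$ and the sequence decreases (or one can just bound $\overline\gamma_\ell\le \overline\gamma_2 = 1/2$ for $\ell\ge 2$).

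\textbf{Key steps.} First, I would record from~(\ref{eq:overlinegamma_recursion}) the two-step consequence above, namely $\overline\gamma_\ell - \overline\gamma_{\ell+2}\leqslant \overline\gamma_\ell/\ell \leqslant 1/\ell$ for all $\ell\geqslant 2$ (and note $\overline\gamma$ is nonincreasing). Second, for the first inequality write $M=\lfloor(m+1)/2\rfloor$ and telescope:
\[
\overline\gamma_k-\overline\gamma_{k+2M} = \sum_{t=0}^{M-1}\bigl(\overline\gamma_{k+2t}-\overline\gamma_{k+2t+2}\bigr) \leqslant \sum_{t=0}^{M-1}\frac{1}{k+2t} \leqslant \frac{M}{k}.
\]
Then it remains to check $M = \lfloor(m+1)/2\rfloor \leqslant m(m+1)/2$, which holds for all $m\geqslant 0$ (at $m=0$ both sides are $0$; for $m\geqslant 1$, $\lfloor(m+1)/2\rfloor\leqslant (m+1)/2 \leqslant m(m+1)/2$ since $m\geqslant 1$). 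Third, for the second inequality set $M'=\lfloor m/2\rfloor$ and telescope the same way starting at index $k+1$:
\[
\overline\gamma_{k+1}-\overline\gamma_{k+1+2M'} = \sum_{t=0}^{M'-1}\bigl(\overline\gamma_{k+1+2t}-\overline\gamma_{k+1+2t+2}\bigr)\leqslant \sum_{t=0}^{M'-1}\frac{1}{k+1+2t}\leqslant \frac{M'}{k+1}\leqslant \frac{M'}{k},
\]
and finish by checking $M' = \lfloor m/2\rfloor \leqslant m^2/2$ for all $m\geqslant 0$ (trivial at $m=0$; for $m\geqslant 1$, $\lfloor m/2\rfloor\leqslant m/2 \leqslant m^2/2$).

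\textbf{Main obstacle.} There is no real obstacle here — the statement is deliberately a "weak estimate," and the only things to be careful about are (a) extracting the clean per-step bound $\overline\gamma_\ell-\overline\gamma_{\ell+2}\leqslant 1/\ell$ correctly from the parity-dependent recursion~(\ref{eq:overlinegamma_recursion}), making sure the two-step product telescopes regardless of the parity of the starting index, and (b) matching the floor functions $\lfloor(m+1)/2\rfloor$ and $\lfloor m/2\rfloor$ against the quadratic right-hand sides $m(m+1)/2$ and $m^2/2$, including the boundary case $m=0$ where both sides vanish. Everything else is a one-line telescoping sum bounded term-by-term by $1/(k+2t)\leqslant 1/k$.
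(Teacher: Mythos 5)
Your proof is correct and follows essentially the same route as the paper's: telescope the difference over two-step increments, bound each increment via the recursion~(\ref{eq:overlinegamma_recursion}) together with a uniform bound on $\overline\gamma_\ell$, and compare the number of terms against the quadratic right-hand side (the paper uses $\overline\gamma_\ell\leqslant 1/2$ to get $1/(2k)$ per step where you use $\overline\gamma_\ell\leqslant 1$ to get $1/(k+2t)\leqslant 1/k$, which is equally sufficient). The only blemish is the harmless slip $\frac{1}{\ell}\overline\gamma_2=\frac{1}{\ell}$ (this product equals $\frac{1}{2\ell}$), which you immediately patch by invoking $\overline\gamma_\ell\leqslant 1$ and which does not affect the argument.
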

\begin{proof}
For $m=0$ both of these inequalities are trivially true. So assume $m\geqslant 1$. For $i=1,\ldots,\lfloor(m+1)/2\rfloor$, using $\overline\gamma_{k+2i-2}\leqslant 1/2$ we have
\[\overline\gamma_{k+2i-2}-\overline\gamma_{k+2i}\leqslant\left(1-\frac{k+2i-2}{k+2i-1}\right)\overline\gamma_{k+2i-2}\leqslant\frac1{2k},\]
and summation over $i$ yields
\[\overline\gamma_k-\overline\gamma_{k+2\lfloor(m+1)/2\rfloor}\leqslant\frac{m+1}{4k}\leqslant\frac{m(m+1)}{2k}.\]
The second inequality is also trivial for $m=1$. For $m\geqslant 2$ and $i=1,\ldots,\lfloor m/2\rfloor$, using $\overline\gamma_{(k+1)+2i-2}\leqslant 1/2$ we have
\begin{equation*}
 \begin{array}{l}
\overline\gamma_{k+1+2i-2}-\overline\gamma_{(k+1)+2i} \\
\leqslant \left(1-\frac{(k+1)+2i-2}{(k+1)+2i-1}\right)\overline\gamma_{(k+1)+2i-2}\leqslant\frac1{2k},
 \end{array}%
\end{equation*}
and summation over $i$ yields
\[\overline\gamma_{k+1}-\overline\gamma_{k+2\lfloor m/2\rfloor+1}\leqslant\frac{m}{4k}\leqslant\frac{m^2}{2k}.\qedhere\]
\end{proof}
\subsection{The induction argument}\label{subsec:actual_proof}
Proposition \ref{prop:reformulation} is a consequence of the following theorem.
\begin{theorem}\label{thm:real_induction_app}
Let $A_1=\{(0,0)\}$ and
\begin{align*}
A_{k+1}
= &f_k(A_k)\cup g_k(A_k) \\
= &\left\{\left(y,\frac{k+2}{k+1}x\right)\ :\ (x,y)\in A_{k}\right\}\cup \\
&\left\{\left(x+\delta_{k+1},\frac{k+2}{k+1}(y-\delta_{k+1})\right)\ :\ (x,y)\in A_{k}\right\}
\end{align*}

for $k\geqslant 1$ where $\displaystyle\delta_k=\frac13\left(k+(-1)^k\gamma_k\right)$. Then for every $k\geqslant 1$ and every $(x,y)\in A_k$ the following statements are true.
\begin{enumerate}
\item For all $m\geqslant 0$, if $(x',y')=G_{km}(x,y)$ then $x'+y'\leqslant 0$.
\item For all $m\geqslant 1$, if $(x',y')=F_{km}(x,y)$ then $x'+y'\leqslant 0$.
\end{enumerate}
In particular, the first statement with $m=0$ implies Proposition \ref{prop:reformulation} and hence Theorem \ref{thm:optimality}.
\end{theorem}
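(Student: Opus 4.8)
The plan is to prove both statements simultaneously by induction on $k$, since they are genuinely intertwined: the bound on $F_{km}(x,y)$ at level $k$ needs a strengthened (negative, $m$-dependent) bound on $\tilde x+\tilde y$ at level $k-1$, which in turn comes from the $G$-statement at level $k-1$. The base case $k=1$ is immediate since $A_1=\{(0,0)\}$, and one checks directly from Lemma~\ref{lem:aux_2} and Lemma~\ref{lem:aux_3} (with $x=y=0$) that $x'+y'=-\tfrac{m(m+1)}{6}-\tfrac13\sum\overline\gamma_{\bullet}\leqslant 0$ and $x'+y'=-\tfrac{(m-1)m}{6}+\tfrac{m-1}{3}\overline\gamma_{k+2}-\tfrac13\sum\overline\gamma_{\bullet}\leqslant 0$ (the positive $\overline\gamma$ term is dominated since $\overline\gamma_k\leqslant 1/2$ and the quadratic term grows). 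For the inductive step, fix $(x,y)\in A_k$ and write it as either $f_{k-1}(\tilde x,\tilde y)$ or $g_{k-1}(\tilde x,\tilde y)$ for some $(\tilde x,\tilde y)\in A_{k-1}$; I would handle the $f_{k-1}$ case in detail and note the $g_{k-1}$ case is analogous.

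The key structural observations driving the step are the composition identities: $G_{km}\circ f_{k-1}=F_{k-1,m+1}$ and $F_{km}\circ f_{k-1}=G_{k,m}\circ f_{k-1}$... more precisely, $G_{km}(x,y)=G_{km}(f_{k-1}(\tilde x,\tilde y))=F_{k-1,m+1}(\tilde x,\tilde y)$, so statement~1 for $(x,y)$ at parameter $m$ is literally statement~2 for $(\tilde x,\tilde y)$ at parameter $m+1$, which holds by the induction hypothesis — this disposes of statement~1 for free. For statement~2, I would first use Lemma~\ref{lem:aux_1} and Lemma~\ref{lem:aux_3} to write $x'+y'$ for $(x',y')=F_{km}(x,y)$ explicitly in terms of $\tilde x$ and $\tilde y$ (substituting $x=\tilde y$, $y=\tfrac{k+1}{k}\tilde x$), obtaining an expression of the shape $\tilde x+\tilde y + (\text{positive multiple of }\tilde x) + (\text{explicit }\gamma\text{-and-polynomial terms})$. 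The point is that the coefficient multiplying the "bad" variable is positive, so I cannot simply drop it; instead I must feed in a genuinely negative bound on $\tilde x+\tilde y$.

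That negative bound is the crux. By the induction hypothesis applied to the $G$-statement at level $k-1$ — specifically $(x'',y'')=G_{k-1,m-1}(\tilde x,\tilde y)$ satisfies $x''+y''\leqslant 0$ — and using the explicit formula from Lemma~\ref{lem:aux_2}, I invert to get $\tilde x+\tilde y\leqslant -c(m)$ where $c(m)=\tfrac{m'(m'+1)}{6}+\cdots$ grows quadratically in $m'=m-1$ minus a bounded $\overline\gamma$-correction; Lemma~\ref{lem:inequality} is exactly what is needed to control those $\overline\gamma$-differences and certify $c(m)\geqslant 0$ and large enough. Plugging this stronger bound into the expression for $x'+y'$ and doing the routine (but delicate) algebra — again bounding the $\overline\gamma$-sums via $\overline\gamma_k\leqslant 1/2$ and Lemma~\ref{lem:inequality} — should yield $x'+y'\leqslant 0$. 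I expect the main obstacle to be precisely this last estimate: matching the quadratic gain $-c(m)$ (scaled by the positive coefficient $\tfrac{m}{k+1}$ or similar) against the positive $\overline\gamma$-contributions and the $+\tfrac{m}{k+1}x$-type term, so that the inequality closes uniformly in both $k$ and $m$ — this is where the careful bookkeeping of parities (odd vs.\ even $k$, and $\lfloor m/2\rfloor$ vs.\ $\lfloor(m+1)/2\rfloor$) in Lemmas~\ref{lem:aux_2}--\ref{lem:inequality} really has to be pushed through rather than hand-waved.
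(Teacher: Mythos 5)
Your plan follows the paper's proof essentially step for step: simultaneous induction on $k$ for both statements, the composition identity $G_{km}\circ f_{k-1}=F_{k-1,m+1}$ (and its analogue $G_{km}\circ g_{k-1}=G_{k-1,m+1}$) disposing of statement~1 for free, and the $F_{km}$ case closed by inverting the induction hypothesis for $G_{k-1,m-1}$ (resp.\ $F_{k-1,m-1}$ in the $g_{k-1}$ case) via Lemmas~\ref{lem:aux_2}--\ref{lem:aux_3} and then invoking Lemma~\ref{lem:inequality}, with the parity bookkeeping you anticipate. The one refinement you will need when executing the ``delicate algebra'' you correctly flag as the crux is a threshold dichotomy on $\tilde y$ (or on $\tilde x+\delta_k$): the inverted bound on $\tilde x+\tilde y$ is not a clean $-c(m)$ but still contains a $-\frac{m-1}{k}\tilde y$ term, so the paper first dispatches the regime where $\tilde y$ is below an explicit quadratic-in-$m$ threshold directly from $\tilde x+\tilde y\leqslant 0$, and only in the complementary large-$\tilde y$ regime combines the inverted bound with that lower bound on $\tilde y$ to absorb the positive $\overline\gamma$-contributions.
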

\begin{proof}
We proceed by induction on $k$. For $k=1$ we only have to consider $(x,y)=(0,0)$. For $(x',y')=G_{1m}(0,0)$, it follows from (\ref{eq:G_for_odd_k}) that
\[x'+y'=-\frac{m(m+1)}{6}-\frac13\sum_{j=1}^{\lfloor(m+1)/2\rfloor}\overline\gamma_{k+2j-1}\leqslant 0.\]
For $(x',y')=F_{1m}(0,0)$, it follows from (\ref{eq:F_for_odd_k}) and $\overline\gamma_3=1/2$ that
\begin{align*}
x'+y'=-\frac{(m-1)m}{6}+\frac{m-1}{3}\overline\gamma_{3}-\frac13\sum_{j=1}^{\lfloor(m-1)/2\rfloor}\overline\gamma_{k+2j+1}\\
=-\frac{(m-1)^2}{6}-\frac13\sum_{j=1}^{\lfloor(m+1)/2\rfloor}\overline\gamma_{k+2j-1}\leqslant 0.
\end{align*}

We now assume that $k>1$ and the statements of the theorem are already proved for all sets $A_l$ with $l<k$. Let $(x,y)$ be an arbitrary element of $A_k$. We distinguish two cases.
\begin{description}
\item[Case 1.] $(x,y)=f_{k-1}(\tilde x,\tilde y)$ for some $(\tilde x,\tilde y)\in A_{k-1}$. If $(x',y')=G_{km}(x,y)=F_{k-1,m+1}(\tilde x,\tilde y)$ then $x'+y'\leqslant 0$ follows immediately from the induction hypothesis applied to $(\tilde x,\tilde y)$. So suppose
\[(x',y')=F_{km}(x,y)=F_{km}\left(\tilde y,\frac{k+1}{k}\tilde x\right).\]
We need to consider the two parities of $k$ separately.
\begin{description}
\item[Odd $k$.] From Lemma \ref{lem:aux_3} it follows that
  \begin{equation}\label{eq:prime_1}
  \begin{split}
    x'+y' &\\
    =&\tilde y+\frac{k+1}{k}\tilde x+\frac{m}{k+1}\tilde y-\frac{(m-1)m}{6}+ \\
    &\frac{m-1}{3}\overline\gamma_{k+2}-\frac13\sum_{j=1}^{\lfloor(m-1)/2\rfloor}\overline\gamma_{k+2j+1}.
  \end{split}
  \end{equation}
By induction $\tilde y+\frac{k+1}{k}\tilde x\leqslant 0$, and using $\overline\gamma_{k+1}\leqslant 1/2$ we conclude that $x'+y'\leqslant 0$ is immediate if $\tilde y\leqslant\frac{(m-1)^2(k+1)}{6m}$. Hence we may assume $\tilde y>\frac{(m-1)^2(k+1)}{6m}$. Let $(x'',y'')=G_{k-1,m-1}(\tilde x,\tilde y)$. By Lemma \ref{lem:aux_2},
\begin{align*}
x''+y''=\tilde x+\tilde y+\frac{m-1}{k}\tilde y-\frac{(m-1)m}{6}+\\
\frac{m-1}{3}\overline\gamma_{k}-\frac13\sum_{j=1}^{\lfloor (m-1)/2\rfloor}\overline\gamma_{k+2j-1},
\end{align*}
and by induction $x''+y''\leqslant 0$. Hence
\begin{align*}
\tilde x+\tilde y\leqslant\frac{(m-1)m}{6}+\frac13\sum_{j=1}^{\lfloor (m-1)/2\rfloor}\overline\gamma_{k+2j-1}- \\
\frac{m-1}{k}\tilde y-\frac{m-1}{3}\overline\gamma_{k},
\end{align*}
and substituting into (\ref{eq:prime_1}) yields together with $\overline\gamma_k>\overline\gamma_{k+2}$,
\begin{multline*}
x'+y'<\frac1k\tilde x+\left(\frac{m}{k+1}-\frac{m-1}{k}\right)\tilde y+ \\
\frac13\left(\overline\gamma_{k+1}-\overline\gamma_{k+2\lfloor(m-1)/2\rfloor+1}\right)\\
=\frac{\tilde x+\tilde y}{k}-\frac{m}{k(k+1)}\tilde y+\frac13\left(\overline\gamma_{k+1}-\overline\gamma_{k+2\lfloor(m-1)/2\rfloor+1}\right).
\end{multline*}
With $\tilde x+\tilde y\leqslant 0$ and $\tilde y>\frac{(m-1)^2(k+1)}{6m}$ this implies
\[x'+y'<\frac13\left(\overline\gamma_{k+1}-\overline\gamma_{k+2\lfloor(m-1)/2\rfloor+1}\right)-\frac{(m-1)^2}{6k},\]
and finally, $x'+y'<0$ by Lemma \ref{lem:inequality}.
\item[Even $k$.] From Lemma \ref{lem:aux_3} it follows that
\begin{equation}\label{eq:prime_2}
\begin{split}
x'+y'=\tilde y+\frac{k+1}{k}\tilde x+\frac{m}{k+1}\tilde y-\frac{(m-1)m}{6} \\
-\frac13\sum_{j=1}^{\lfloor m/2\rfloor}\overline\gamma_{k+2j}.
\end{split}
\end{equation}
By induction $\tilde y+\frac{k+1}{k}\tilde x\leqslant 0$, so $x'+y'\leqslant 0$ is immediate if $\tilde y\leqslant\frac{(m-1)(k+1)}{6}$. Hence we may assume $\tilde y>\frac{(m-1)(k+1)}{6}$. Let $(x'',y'')=G_{k-1,m-1}(\tilde x,\tilde y)$. By Lemma \ref{lem:aux_2},
\begin{align*}
x''+y''= \tilde x+\tilde y+\frac{m-1}{k}\tilde y-\frac{(m-1)m}{6}- \\
\frac13\sum_{j=1}^{\lfloor m/2\rfloor}\overline\gamma_{k+2j-2},
\end{align*}

and by induction $x''+y''\leqslant 0$. Hence
\[\tilde x+\tilde y\leqslant\frac{(m-1)m}{6}+\frac13\sum_{j=1}^{\lfloor m/2\rfloor}\overline\gamma_{k+2j-2}-\frac{m-1}{k}\tilde y,\]
and substituting into (\ref{eq:prime_2}) yields
\begin{align*}
x'+y'<\frac1k\tilde x+\left(\frac{m}{k+1}-\frac{m-1}{k}\right)\tilde y+
\frac13\left(\overline\gamma_{k}-\overline\gamma_{k+2\lfloor m/2\rfloor}\right)\\
=\frac{\tilde x+\tilde y}{k}-\frac{m}{k(k+1)}\tilde y+\frac13\left(\overline\gamma_{k}-\overline\gamma_{k+2\lfloor m/2\rfloor}\right).
\end{align*}

With $\tilde x+\tilde y\leqslant 0$ and $\tilde y>\frac{(m-1)(k+1)}{6}$ this implies
\[x'+y'<\frac13\left(\overline\gamma_{k}-\overline\gamma_{k+2\lfloor m/2\rfloor}\right)-\frac{m(m-1)}{6k},\]
and finally, $x'+y'<0$ by Lemma \ref{lem:inequality}.
\end{description}

\item[Case 2.] $(x,y)=g_{k-1}(\tilde x,\tilde y)$ for some $(\tilde x,\tilde y)\in A_{k-1}$. If $(x',y')=G_{km}(x,y)=G_{k-1,m+1}(\tilde x,\tilde y)$ then $x'+y'\leqslant 0$ follows immediately from the induction hypothesis applied to $(\tilde x,\tilde y)$. So suppose
\[(x',y')=F_{km}(x,y)=F_{km}\left(\tilde x+\delta_k,\frac{k+1}{k}(\tilde y-\delta_k)\right).\]
Again we discuss odd and even $k$ separately.
\begin{description}
\item[Odd $k$.] From Lemma \ref{lem:aux_3} it follows that
\begin{equation}\label{eq:prime_3a}
\begin{split}
x'+y' = & \tilde x+\delta_k+\frac{k+1}{k}(\tilde y-\delta_k)+ \\
&\frac{m}{k+1}(\tilde x+\delta_k)-  \frac{(m-1)m}{6}+ \\
&\frac{m-1}{3}\overline\gamma_{k+2}- \frac13\sum_{j=1}^{\lfloor(m-1)/2\rfloor}\overline\gamma_{k+2j+1}.
\end{split}
\end{equation}
By induction $\tilde x+\delta_k+\frac{k+1}{k}(\tilde y-\delta_k)\leqslant 0$, and using $\overline\gamma_{k+2}\leqslant 1/2$ we conclude that $x'+y'\leqslant 0$ is immediate if $m(\tilde x+\delta_k)/(k+1)\leqslant (m-1)^2/6$. So with $\delta_k=\frac13(k-\gamma_k)$ we may assume
\begin{equation}\label{eq:tilde_x_bound}
\tilde x>\frac{(k+1)(m-1)^2}{6m}-\frac{k-\gamma_k}{3}.
\end{equation}
Substituting $\delta_k=\frac13(k-\gamma_k)$ into (\ref{eq:prime_3a}), rearranging terms, and using $\overline\gamma_{k+2}=k\overline\gamma_k/(k+1)$ we obtain
\begin{equation}\label{eq:prime_3}
\begin{split}
x'+y'=& \\
&\tilde x+\tilde y+\frac1k\tilde y+\frac{m}{k+1}\tilde x-\frac{(m-2)(m-1)}{6}- \\
&\frac13\sum_{j=1}^{\lfloor(m-1)/2\rfloor}\overline\gamma_{k+2j+1}-\frac{m}{3(k+1)}+\frac{\overline\gamma_k}{3(k+1)}.
\end{split}
\end{equation}
For $m=1$ we rearrange terms and use $\tilde x+\tilde y\leqslant 0$ and (\ref{eq:tilde_x_bound}) to obtain
\begin{multline*}
x'+y'=\tilde x+\tilde y+\frac1k\tilde y+\frac{1}{k+1}\tilde x-\frac{1}{3(k+1)}+\frac{\overline\gamma_k}{3(k+1)}\\ =\frac{k+1}{k}(\tilde x+\tilde y)-\frac{\tilde x}{k(k+1)}-\frac{1}{3(k+1)}+\frac{\overline\gamma_k}{3(k+1)}\\
\leqslant -\frac{1}{k(k+1)}\left(\tilde x+\frac{k-\overline\gamma_k}{3}\right)<0.
\end{multline*}
For $m\geqslant 2$ let $(x'',y'')=F_{k-1,m-1}(\tilde x,\tilde y)$. By Lemma \ref{lem:aux_3},
\begin{multline*}
x''+y''=\tilde x+\tilde y+\frac{m-1}{k}\tilde x-\frac{(m-2)(m-1)}{6}- \\
\frac13\sum_{j=1}^{\lfloor(m-1)/2\rfloor}\overline\gamma_{k+2j-1},
\end{multline*}
and by induction $x''+y''\leqslant 0$. So
\begin{multline*}
\tilde x+\tilde y\leqslant \frac{(m-2)(m-1)}{6}+\frac13\sum_{j=1}^{\lfloor(m-1)/2\rfloor}\overline\gamma_{k+2j-1}- \\
\frac{m-1}{k}\tilde x,
\end{multline*}
and substituting into (\ref{eq:prime_3}) yields
\begin{multline*}
x'+y'\leqslant\frac{\tilde x+\tilde y}{k}-\frac{m}{k(k+1)}\tilde x-\frac{m}{3(k+1)}+\frac{\overline\gamma_k}{3(k+1)}+ \\
\frac13\left(\overline\gamma_{k+1}-\overline\gamma_{k+2\lfloor(m-1)/2\rfloor+1}\right).
\end{multline*}

With $\tilde x+\tilde y\leqslant 0$ and (\ref{eq:tilde_x_bound}) we obtain
\begin{multline*}
x'+y'\leqslant\frac{m}{3k(k+1)}\left(k-\frac{(k+1)(m-1)^2}{2m}-\gamma_k\right)-\\
\frac{m}{3(k+1)}+\frac{\overline\gamma_k}{3(k+1)}+\frac13\left(\overline\gamma_{k+1}-\overline\gamma_{k+2\lfloor(m-1)/2\rfloor+1}\right)\\
=\frac{(m-1)^2}{6k}+\frac{1-m}{3(k+1)}\overline\gamma_k+\frac13\left(\overline\gamma_{k+1}-\overline\gamma_{k+2\lfloor(m-1)/2\rfloor+1}\right)\\ <\frac13\left(\overline\gamma_{k+1}-\overline\gamma_{k+2\lfloor(m-1)/2\rfloor+1}\right)-\frac{(m-1)^2}{6k},
\end{multline*}
and finally $x'+y'< 0$ by Lemma \ref{lem:inequality}.
\item[Even $k$.] From Lemma \ref{lem:aux_3} it follows that
\begin{equation}\label{eq:prime_4a}
\begin{split}
x'+y' = \tilde x+\delta_k+\frac{k+1}{k}(\tilde y-\delta_k)+ \\
\frac{m}{k+1}(\tilde x+\delta_k)-  \frac{(m-1)m}{6}-\frac13\sum_{j=1}^{\lfloor m/2\rfloor}\overline\gamma_{k+2j}.
\end{split}
\end{equation}
By induction $\tilde x+\delta_k+\frac{k+1}{k}(\tilde y-\delta_k)\leqslant 0$, and we conclude that $x'+y'\leqslant 0$ is immediate if $m(\tilde x+\delta_k)/(k+1)\leqslant (m-1)m/6$. So with $\delta_k=\frac13\left(k+\gamma_k\right)$ we may assume
\begin{equation}\label{eq:tilde_x_bound_2}
\tilde x>\frac{(k+1)(m-1)}{6}-\frac{k+\gamma_k}{3}.
\end{equation}
Substituting $\delta_k=\frac13(k+\gamma_k)$ into (\ref{eq:prime_4a}) and rearranging terms we obtain
\begin{equation}\label{eq:prime_4}
\begin{split}
x'+y'=\tilde x+\tilde y+\frac1k\tilde y+\frac{m}{k+1}\tilde x-\frac{(m-2)(m-1)}{6}+ \\ \frac{m-1}{3}\overline\gamma_k-\frac13\sum_{j=1}^{\lfloor m/2\rfloor}\overline\gamma_{k+2j}-\frac{m}{3(k+1)}(1+\overline\gamma_k).
\end{split}
\end{equation}
For $m=1$ we rearrange terms and use $\tilde x+\tilde y\leqslant 0$ and (\ref{eq:tilde_x_bound_2}) to obtain
\begin{multline*}
x'+y'=\tilde x+\tilde y+\frac1k\tilde y+\frac{1}{k+1}\tilde x-\frac{1+\overline\gamma_k}{3(k+1)}\\
=\frac{k+1}{k}(\tilde x+\tilde y)-\frac{\tilde x}{k(k+1)}-\frac{1+\overline\gamma_k}{3(k+1)}\\
\leqslant -\frac{1}{k(k+1)}\left(\tilde x+\frac{k+\gamma_k}{3}\right)<0.
\end{multline*}
For $m\geqslant 2$ let $(x'',y'')=F_{k-1,m-1}(\tilde x,\tilde y)$. By Lemma \ref{lem:aux_3},
\begin{multline*}
x''+y''=\tilde x+\tilde y+\frac{m-1}{k}\tilde x-\frac{(m-2)(m-1)}{6}+ \\ \frac{m-2}{3}\overline\gamma_{k+1}-\frac13\sum_{j=1}^{\lfloor(m-2)/2\rfloor}\overline\gamma_{k+2j},
\end{multline*}

and by induction $x''+y''\leqslant 0$. So
\begin{multline*}
\tilde x+\tilde y\leqslant \frac{(m-2)(m-1)}{6}+\frac13\sum_{j=1}^{\lfloor(m-2)/2\rfloor}\overline\gamma_{k+2j}- \\
\frac{m-1}{k}\tilde x-\frac{m-2}{3}\overline\gamma_{k+1},
\end{multline*}
and substituting this into (\ref{eq:prime_4}), taking into account $\overline\gamma_{k+1}=\overline\gamma_k$, yields
\begin{multline*}
x'+y'\leqslant \frac{\tilde x+\tilde y}{k}-\frac{m}{k(k+1)}\left(\tilde x+\frac{k+\gamma_k}{3}\right)+ \\ \frac13\left(\overline\gamma_k-\overline\gamma_{k+2\lfloor m/2\rfloor}\right).
\end{multline*}
With $\tilde x+\tilde y\leqslant 0$ and (\ref{eq:tilde_x_bound_2}) we obtain
\[x'+y'<\frac13\left(\overline\gamma_k-\overline\gamma_{k+2\lfloor m/2\rfloor}\right)-\frac{(m-1)m}{6k}\]
and finally $x'+y'< 0$ by Lemma \ref{lem:inequality}. \qedhere
\end{description}
\end{description}
\end{proof}


\end{appendix}


\end{document}